\pgfplotsset{compat=newest}
\newtheorem{theorem}{Theorem}
\newtheorem{lemma}{Lemma}
\definecolor{DarkGreen}{rgb}{0.1,0.5,0.1}
\definecolor{DarkRed}{rgb}{0.5,0.1,0.1}
\definecolor{DarkBlue}{rgb}{0.1,0.1,0.5}
\crefname{section}{\S}{\S\S}
\Crefname{section}{\S}{\S\S}
\DeclareMathOperator{\tr}{tr}
\newcommand{\vertiii}[1]{{\left\vert\kern-0.25ex\left\vert\kern-0.25ex\left\vert #1\right\vert\kern-0.25ex\right\vert\kern-0.25ex\right\vert}}
\renewcommand*\env@matrix[1][*\c@MaxMatrixCols c]{%
  \hskip -\arraycolsep
  \let\@ifnextchar\new@ifnextchar
  \array{#1}}
\begin{document}

\title{Efficient Learning of Mixed Membership Models}
\author{Zilong Tan\\
Department of Computer Science\\
Duke University \\
Email: {\tt ztan@cs.duke.edu}
\and
Sayan Mukherjee\\
Departments of Statistical Science\\
Computer Science, Mathematics, \\
Biostatistics \& Bioinformatics \\
Duke University \\
Email: {\tt sayan@stat.duke.edu}}

\maketitle

\begin{abstract}
We present an efficient algorithm for learning mixed membership models when the number of variables $p$ is much larger than the number of hidden components $k$. This algorithm reduces the computational complexity of state-of-the-art tensor methods, which require decomposing an $O\left(p^3\right)$ tensor, to factorizing $O\left(p/k\right)$ sub-tensors each of size $O\left(k^3\right)$. In addition, we address the issue of negative entries in the empirical method of moments based estimators. We provide sufficient conditions under which our approach has provable guarantees. Our approach obtains competitive empirical results on both simulated and real data.
\end{abstract}

\section{Introduction}

Mixed membership models \cite{Woodbury,pritchard_inference_2000,pritchard_association_2000,blei_latent_2003,Erosheva2005} have been used extensively across applications ranging from modeling population structure in genetics  \cite{pritchard_inference_2000,pritchard_association_2000} to topic modeling of documents \cite{Woodbury,blei_latent_2003,Erosheva2005}. Mixed membership models use  Dirichlet latent variables to define cluster membership where samples can partially belong to each of $k$ latent components. Parameter estimation for such latent variables models (LVMs) using maximum likelihood methods such as expectation maximization is computationally intensive for large data, for example, if number of samples $n$ is large.

Parameter estimation using the method of moments for LVMs is an attractive scalable alternative that has been shown to have certain theoretical and computational advantages over maximum likelihood methods in the setting when $n$ is large. For LVMs, method of moments approaches reduce to tensor methods---the moments of the model parameters are expressed as a function of statistics of the observations in a tensor form. Inference in this setting becomes a problem of tensor factorization. Computational advantages of using tensor methods have been observed for many popular models, including latent Dirichlet allocation \cite{Anandkumar12}, spherical Gaussian mixture models \cite{Hsu13}, hidden Markov models \cite{Anandkumar12}, independent component analysis \cite{Comon10}, and multi-view models \cite{Anandkumar14}. An appealing property of tensor methods is the guarantee of a unique decomposition under mild conditions \cite{Kruskal77,Leurgans93}. 

There are two complications to using standard tensor decomposition methods \cite{Anandkumar16,Anandkumar14,Gu14,Kuleshov15,Colombo16,Kim14,Chi12} for LVMs. The first problem is computation and space complexity. Given $p$ variables in the LVM, parameter inference requires factorizing typically a non-orthogonal estimator tensor of size $O\left(p^3\right)$ \cite{Anandkumar14,Kuleshov15,Colombo16}, which is prohibitive for large $p$. When the estimator is orthogonal and symmetric, this can be done in $O\left(p^2 \log p\right)$ \cite{Wang15}. Online tensor decomposition \cite{Huang13} uses dimension reduction to instead factorize a reduced $k$-by-$k$-by-$k$ tensor. However, the dimension reduction can be slower than decomposing the estimator directly for large sample sizes, as well as suffer from high variance \cite{Wang15}. We introduce a simple factorization with improved complexity for the general case where the parameters are not required to be orthogonal.

The second problem arises from negative entries in the empirical moments tensor. LVMs for count data are constrained to have nonnegative parameters. However, the empirical moments tensor computed from the data may contain negative elements due to sampling variation and noise. Indeed, for small sample sizes or data with many small or zero counts, there will be many negative entries in the empirical moments tensor. General tensor decomposition algorithms \cite{Kuleshov15,Colombo16}, including the tensor power method (TPM) \cite{Anandkumar14}, do not guarantee the nonnegativity of model parameters. Approaches such as positive/nonnegative tensor factorization \cite{Chi12,Shashua05,Welling01} also do not address this situation as they require all the elements of the tensor to be factorized to be nonnegative. With robust tensor methods \cite{Anandkumar16,Gu14}, sparse negative entries may potentially be treated as corrupted elements; however, these methods are not applicable in this setting since there can be many negative elements.

In this paper, we introduce a novel parameter inference algorithm called partitioned tensor parallel quadratic programming (PTPQP) that is efficient in the setting where the number of variables $p$ is much larger than the number of latent components $k$. The algorithm is also robust to negative entries in the empirical moments tensor. There are two key innovations in the PTPQP algorithm. The first innovation is a partitioning technique which recovers the parameters through factorizing $O\left(p/k\right)$ much smaller sub-tensors each of size $O\left(k^3\right)$. This technique can also be combined with methods \cite{Wang15,Song16,Huang13} to obtain further improved complexities. The second innovation is a parallel quadratic programming \cite{Brand11} based algorithm to factor tensors with negative entries under the constraint that the factors are all nonnegative. To the best of our knowledge, this is the first algorithm designed to address the problem of negative entries in empirical estimator tensors. We show that the proposed factorization algorithm converges linearly with respect to each factor matrix. We also provide sufficient conditions under which the partitioned factorization scheme is consistent, the parameter estimates converge to the true parameters.

\section{Preliminaries}
\paragraph{Notations}
We use bold lowercase letters to represent vectors and bold capital letters for matrices. Tensors are denoted by calligraphic capital letters. The subscript notation $\bm{A}_j$ refers to $j$-th column of matrix $\bm{A}$. We denote the  $j$-th column
of the identity matrix as $\bm{e}_j$ and $\bm{1}$ is a vector of ones. We further write $\text{diag}\left(\bm{x}\right)$ for a diagonal matrix whose diagonal entries are $\bm{x}$, and $\text{diag}\left(\bm{A}\right)$ to mean a vector of the diagonal entries of $\bm{A}$.

Element-wise matrix operators include $\succ$ and $\succeq$, e.g., $\bm{A} \succeq 0$ means that $\bm{A}$ has nonnegative entries. $\left(\cdot\right)_+$ refers to element-wise $\max\left(\cdot,0\right)$. $*$ and $\oslash$ respectively represent element-wise multiplication and division. Moreover, $\times$ refers to the outer product and $\odot$ denotes the Khatri-Rao product. $\left\|\cdot\right\|_F$ and $\left\|\cdot\right\|_2$ represent the Frobenius norm and spectral norm, respectively. 

\paragraph{Tensor basics}
This paper uses similar tensor notations as \cite{Kolda09}. In particular, we are primarily concerned with Kruskal tensors in $\mathbb{R}^{d_1 \times d_2 \times d_3}$, which can be expressed in the form of
\begin{align}
\mathcal{T} = \sum_{j=1}^r \bm{A}_j \times \bm{B}_j \times \bm{C}_j, \label{eq:cp}
\end{align}
where $\bm{A}$, $\bm{B}$, and $\bm{C}$ are respectively $d_1$-by-$r$, $d_2$-by-$r$, and $d_3$-by-$r$ factor matrices. The rank of $\mathcal{T}$ is defined as the smallest $r$ that admits such a decomposition. The decomposition is known as the CP (CANDECOMP/PARAFAC) decomposition. The $j$-mode unfolding of $\mathcal{T}$, denoted by $\mathcal{T}_{\left(j\right)}$, for $j=1,2,3$ is a $d_j$-by-$\left(\prod_{t\neq j} d_t\right)$ matrix whose rows are serializations of the tensor fixing the index of the $j$-th dimension. The unfoldings have the following well-known compact expressions:
\begin{align}
\mathcal{T}_{\left(1\right)} = \bm{A}\left(\bm{C}\odot\bm{B}\right)^\top,
\quad
\mathcal{T}_{\left(2\right)} = \bm{B}\left(\bm{C}\odot\bm{A}\right)^\top,
\quad
\mathcal{T}_{\left(3\right)} = \bm{C}\left(\bm{B}\odot\bm{A}\right)^\top.
\label{eq:unfoldings}
\end{align}

\section{Learning through Method of Moments}
\label{sec:structure}
\subsection{Generalized Dirichlet latent variable models}
A generalized Dirichlet latent variable model (GDLM) was proposed in \cite{Zhao16} for the joint distribution of $n$ observations $\bm{y}_1,\bm{y}_2,\cdots,\bm{y}_n$. Each observation $\bm{y}_i$ consists of $p$ variables $\bm{y}_i = \left(y_{i1},y_{i2},\cdots,y_{ip}\right)^\top$. GDLM assumes a generative process involving $k$ hidden components. For each observation, sample a random Dirichlet vector $\bm{x}_i = \left(x_{i1},x_{i2},\cdots,x_{ik}\right)^\top \in \Delta^{k-1}$ with concentration parameter $\bm{\alpha} = \left(\alpha_1,\alpha_2,\cdots,\alpha_k\right)^\top$. The elements of $\bm{x}_i$ are the membership probabilities for $\bm{y}_i$ to belong to each of the $k$ components. Specifically,
\[
y_{ij} \sim \sum_{h=1}^k x_{ih} g_j \left(\theta_{jh}\right),
\]
where $g_j\left(\theta_{jh}\right)$ is the density of the $j$-th variable specific to component $h$ with parameter $\bm{\theta}_j = \left(\bm{\theta}_{j1},\bm{\theta}_{j2},\cdots, \bm{\theta}_{jk}\right)$. One advantage of GLDM is that $y_{ij}$ can take categorical values. Let $d_j$ denote the number of categories for the $j$-th variable (set $d_j = 1$ for scalar variables), $\bm{\theta}_j$ becomes a $d_j$-by-$k$ probability matrix where the $c$-th row corresponds to category $c$. We aim to accurately recover $\bm{\theta}_j$ from independent copies of $\bm{y}_i$ involving variables of mixed data types, either categorical or non-categorical.

\subsection{Moment-based estimators}
\label{sec:estimators}
The moment estimators of latent variable models typically take the form of a tensor \cite{Anandkumar14}. Consider the estimators of GDLM \cite{Zhao16} for example. Let $\bm{b}_{ij} = \bm{e}_{y_{ij}}$ if variable $j$ is categorical; $\bm{b}_{ij} = y_{ij}$ otherwise. The second- and third- order parameter estimators for variable $j$, $s$, and $t$ are written
\begin{align*}
\mathcal{M}^{js} &=
\mathbb{E}\left[\bm{b}_{ij} \times \bm{b}_{is}\right] - \frac{\alpha_0}{\alpha_0 + 1} \mathbb{E}\left[\bm{b}_{ij}\right] \mathbb{E}\left[\bm{b}_{is}\right]^\top\\
\mathcal{M}^{jst} &= \mathbb{E}\left[\bm{b}_{ij} \times \bm{b}_{is} \times \bm{b}_{it} \right]
+ \frac{2\alpha_0^2}{\left(\alpha_0 + 1\right)\left(\alpha_0 + 2\right)}  \mathbb{E}\left[\bm{b}_{ij}\right] \times \mathbb{E}\left[\bm{b}_{is}\right] \times \mathbb{E}\left[\bm{b}_{it}\right]\\
&\phantom{=} - \frac{\alpha_0}{\alpha_0 + 2} \left(
 \mathbb{E}\left[\mathbb{E}\left[\bm{b}_{ij}\right] \times \bm{b}_{is} \times \bm{b}_{it}\right]
+ \mathbb{E}\left[\bm{b}_{ij} \times \mathbb{E}\left[\bm{b}_{is}\right] \times \bm{b}_{it}\right]
+ \mathbb{E}\left[\bm{b}_{ij} \times \bm{b}_{is} \times \mathbb{E}\left[\bm{b}_{it}\right]\right]
\right).
\end{align*}
Alternatively, $\mathcal{M}^{js}$ and $\mathcal{M}^{jst}$ have the following CP decomposition into parameters $\bm{\theta}_j$:
\begin{align}
\mathcal{M}^{js} &= \sum_{h\geq 1} \frac{\alpha_h}{\alpha_0 \left(\alpha_0 + 1\right)} \bm{\theta}_{jh} \times \bm{\theta}_{sh}, \quad \bm{\theta}_{uv} \in \mathbb{R}^{d_i} \label{eq:lda-2nd}\\
\mathcal{M}^{jst} &= \sum_{h\geq 1} \frac{2\alpha_h}{\alpha_0\left(\alpha_0 + 1\right)\left(\alpha_0 + 2\right)} \bm{\theta}_{jh} \times \bm{\theta}_{sh} \times \bm{\theta}_{th}, \quad \bm{\theta}_{uv} \in \mathbb{R}^{d_i}. \label{eq:lda}
\end{align}
\cref{sec:moments} provides the derivation details of these estimators. For the special case of latent Dirichlet allocation, $\mathcal{M}^{js}$ and $\mathcal{M}^{jst}$ are scalar joint probabilities.

The parameters $\bm{\theta}_j$ are typically obtained by factorizing the block tensor $\mathcal{M}_2$ whose $\left(j,s\right)$-th element is the empirical $\widehat{\mathcal{M}}^{js}$ and/or $\mathcal{M}_3$ whose $\left(j,s,t\right)$-th element is the empirical $\widehat{\mathcal{M}}^{jst}$ \cite{Anandkumar16,Anandkumar14,Zhao16}. Note that $\bm{\theta}_j$ are generally non-orthogonal, and thus preprocessing steps (see \cref{sec:tpm_preproc}) are needed for orthogonal decomposition methods \cite{Wang15,Song16,Anandkumar14}. The preprocessing can be expensive and often leads to suboptimal performance \cite{Souloumiac09,Colombo16}. Here, we highlight a few relevant observations:
\begin{itemize}
\item $\mathcal{M}^{js}$ alone does not yield unique parameters $\bm{\theta}_j$ due to the well-known rotation problem. Suppose that $\bm{\theta}_j^*$ and $\bm{\theta}_s^*$ are the ground-truth parameters satisfying \eqref{eq:lda-2nd} and any invertible $\bm{R}$, there exists decomposition $\bm{\theta}_j^\prime = \bm{\theta}_j^* \bm{R}$ and $\bm{\theta}_s^\prime = \bm{R}^{-1} \bm{\theta}_s^*$ that also satisfy \eqref{eq:lda-2nd} but are not ground-truth parameters.
 The ground-truth parameters are not uniquely identifiable through $\mathcal{M}^{js}$, this is true even when enforcing nonnegativity constraints on parameters \cite{Donoho04}.
\item $\mathcal{M}^{jst}$ is sufficient to uniquely recover the parameters under certain mild conditions \cite{Kruskal77}; for example, when any two of $\bm{\theta}_j$, $\bm{\theta}_s$, and $\bm{\theta}_t$ have linearly independent columns and the columns of the third are pair-wise linearly independent \cite{Leurgans93}.
\item The empirical estimator $\mathcal{\widehat{M}}^{jst}$ generally contains negative entries due to variance and noise. The fraction of negative entries can approach 50\%, as we shall see in experiments. We address this issue in \cref{sec:neg}.
\item While the decomposition \eqref{eq:lda} can be unique up to permutation and rescaling, the correspondence between each column of the factor matrix and each hidden component may not be consistent across multiple decompositions. Techniques for achieving consistency are developed in \cref{sec:matching}.
\end{itemize}

\subsection{Computational complexity}
\label{sec:complexity}
Tensor methods such as TPM typically decompose the $O\left(p^3 d_{max}^3\right)$ full estimator tensor that includes all variables. More efficient algorithms have been developed for the case that parameters are orthogonal \cite{Wang15,Song16}, and when the sample size is small \cite{Huang13}. However, these methods do not apply in the general case where the parameters are non-orthogonal and the sample size can be potentially large. A key insight underlying our approach is that it is sufficient to recover the parameters by factorizing only $O\left(p/k\right)$ much smaller sub-tensors each of size $O\left(k^3\right)$. This technique can also be combined with the aforementioned methods to further improve the complexity in certain cases.

\section{An efficient algorithm}
In this section, we develop partitioned tensor parallel quadratic programming (PTPQP) an efficient approximate algorithm for learning mixed membership models. We first introduce a novel partitioning-and-matching scheme that reduces parameter estimation to factorizing a sequence of sub-tensors. Then, we develop a nonnegative factorization algorithm that can handle negative entries in the sub-tensors.

\subsection{Partitioned factorization}
\label{sec:partition}
Factorizing the full tensor formed by all $\mathcal{M}^{jst}$ is expensive while a three-variable tensor $\mathcal{M}^{jst}$ in \eqref{eq:lda} alone may not be sufficient to determine $\bm{\theta}_j$ when $k$ is large. In this section, we consider factorizing the sub-tensors corresponding to a cover of the set of variables $\left[p\right]$ such that each sub-tensor admits an identifiable CP decomposition \eqref{eq:cp}, i.e. unique up to permutation and rescaling of columns. This gives the parameters for all variables. Suppose that $p > k$ and the maximum number of categories $d_{\text{max}}$ is a constant, the aggregated size of the sub-tensors can be much smaller, i.e., $O\left(p k^2\right)$, than the size $O\left(p^3\right)$ of the full estimator.

Let $\pi^j$, $\pi^s$, and $\pi^t$ denote ordered subsets $\subseteq \left[p\right]$, with cardinality $\left|\pi_j\right| = p_j$, $\left|\pi_s\right| = p_s$, and $\left|\pi_t\right| = p_t$, respectively. Consider the $p_j$-by-$p_s$-by-$p_t$ block tensor \footnote{For block tensor operations, see e.g., \cite{Ragnarsson12}.} $\mathcal{M}^{\pi^j\pi^s\pi^t}$ whose $\left(u,v,w\right)$-th element is the tensor $\mathcal{M}_{uvw}^{\pi^j\pi^s\pi^t} = \mathcal{M}^{\pi_u^j\pi_v^s\pi_w^t}$. From \eqref{eq:lda}, we have that
\begin{align}
\mathcal{\bm{M}}^{\pi^j\pi^s\pi^t} =
\sum_{h=1}^k
\frac{2\alpha_h}{\alpha_0\left(\alpha_0 + 1\right)\left(\alpha_0 + 2\right)} 
\begin{bmatrix}
\bm{\theta}_{{\pi_1^j}h}\\
\bm{\theta}_{{\pi_2^j}h}\\
\vdots\\
\bm{\theta}_{{\pi_{p_j}^j}h}
\end{bmatrix}
\times
\begin{bmatrix}
\bm{\theta}_{{\pi_1^s}h}\\
\bm{\theta}_{{\pi_2^s}h}\\
\vdots\\
\bm{\theta}_{{\pi_{p_s}^s}h}
\end{bmatrix}
\times
\begin{bmatrix}
\bm{\theta}_{{\pi_1^t}h}\\
\bm{\theta}_{{\pi_2^t}h}\\
\vdots\\
\bm{\theta}_{{\pi_{p_t}^t}h}
\end{bmatrix}. \label{eq:part-decomp}
\end{align}
Clearly, the block tensor is identifiable if it has an identifiable sub-tensor. Suppose that a sub-tensor $\mathcal{\bm{M}}^{\pi^u\pi^v\pi^w}$ is identifiable, then one can construct an identifiable tensor $\mathcal{\bm{M}}^{\pi^{j\prime}\pi^{s\prime}\pi^{t\prime}}$ from  $\mathcal{\bm{M}}^{\pi^j\pi^s\pi^t}$ by setting 
\begin{align}
\pi^{j\prime} = \pi^j \cup \pi^u, \quad \pi^{s\prime} = \pi^s \cup \pi^v, \quad \pi^{t\prime} = \pi^t \cup \pi^w. \label{eq:aug}
\end{align}
We further remark that a sub-tensor can be identifiable under mild conditions, for example, if the sum of the Kruskal rank of the three factor matrices is at least than $2 k + 2$ \cite{Kruskal77}. 

Given an identifiable sub-tensor $\mathcal{M}^{\pi^u\pi^v\pi^w}$ of {\em anchor variables} indexed by $\pi^u$, $\pi^v$, and $\pi^w$, the partitioning produces a set of sub-tensors (partitions) constructed through \eqref{eq:aug}, that includes all variables. Thus, $\mathcal{M}^{\pi^u\pi^v\pi^w}$ is a common sub-tensor shared across all partitions. We choose anchor variables whose parameter matrices are of full column rank to obtain an identifiable $\mathcal{M}^{\pi^u\pi^v\pi^w}$. Finally, one can divide the rest of variables evenly and randomly into the partitions.

\subsection{Matching parameters with hidden components}
\label{sec:matching}
Since the factorization of a partition \eqref{eq:part-decomp} can only be identifiable up to permutation and rescaling of the columns of constituent $\bm{\theta}_j$, the correspondence between the columns of $\bm{\theta}_j$ and hidden components can differ across partitions. To enforce consistency, we associate a permutation operator $\psi^j$ for each variable $j$ such that $\left(\psi^j \bm{\theta}_j\right)_h$ are the parameters specific to hidden component $h$ across all variables $j$. Consider the following vector representation of $\psi$:
\begin{align*}
\psi & = \left(\psi_1, \psi_2, \cdots, \psi_k\right), \quad \psi_i \in \left[k\right]\\
\psi \bm{A} &= \left[\bm{A}_{\psi_1}, \bm{A}_{\psi_1}, \cdots, \bm{A}_{\psi_k}\right].
\end{align*}
Observe that $\psi^j=\psi^s=\psi^t$ within a factorization of $\mathcal{M}^{jst}$, and this also holds for the partitioned factorization \eqref{eq:part-decomp} of $\mathcal{\mathcal{M}}^{\pi^j\pi^s\pi^t}$ as well, i.e., $\psi^x = \psi^y, \quad \forall x, y \in \pi^j\cup\pi^s\cup\pi^t.$

Consider the factorizations of $\mathcal{\mathcal{M}}^{\pi^j\pi^s\pi^t}$ and $\mathcal{\mathcal{M}}^{\pi^u\pi^v\pi^w}$ and suppose that $\exists x\in \left(\pi^j\cup\pi^s\cup\pi^t\right) \cap \left(\pi^u\cup\pi^v\cup\pi^w\right)$. The permutation
operator for one factorization is determined given the other by column matching the parameters of variable $x$ in both factorizations. Thus, an inductive way to achieve a consistent factorization is to start with one factorization, and let its permutation be the identity $\left(1,2,\cdots,k\right)$, then perform the factorization over new sets of variables with at least one variable in common with the initial factorization. Permutations for the sequential factorizations are determined via column matching parameter matrices of the common variables.

Given two factorized parameter matrices $\bm{\theta}_j$ and $\bm{\theta}_j^\prime$ of variable $j$, our goal is to find a {\em consistent permutation} $\psi$ (of $\bm{\theta}_j$ with respect to $\bm{\theta}_j^\prime$) such that $\left(\psi \bm{\theta}_j\right)_h$ and $\bm{\theta}_{jh}^\prime$ correspond to the same hidden component for all $h \in \left[k\right]$. We now present an algorithm with provable guarantees to compute a consistent permutation.

\paragraph{Smallest angle matching}
\label{sec:angle}
A simple matching algorithm is to match the two columns of the two parameter matrices that have the smallest angle between them. Consider the factorizations of $\mathcal{M}^{jst}$ and $\mathcal{M}^{juv}$ which yield respectively parameters $\bm{\theta}_j$ and $\bm{\theta}_j^\prime$ for the common variable $j$. Given the permutation $\psi^j$ for $\mathcal{M}^{jst}$, the permutation $\psi^u$ for $\mathcal{M}^{juv}$ is computed by:
\begin{align}
\psi_s^u = \arg \max_t \left( \bm{\bar{\theta}}_j^{\prime\top} \psi^j \bm{\bar{\theta}}_j \right)_{ts} \label{eq:sam}.
\end{align}
Here, $\bm{\bar{\theta}}_j$ and $\bm{\bar{\theta}}_j^\prime$ represent respectively the normalized $\bm{\theta}_j$ and $\bm{\theta}_j^\prime$ with each column having unit Euclidean norm. 

There are cases that $\psi^u$ computed via \eqref{eq:sam} is not consistent: 1) $\psi^u$ contains duplicate entries and hence is ineligible; and 2) since $\bm{\theta}_j$ and $\bm{\theta}_j^\prime$ are the factorized parameter matrices which are generally perturbed from the ground-truth, the resulting $\psi^u$ may differ from the consistent permutation. To cope with these cases, we establish in \cref{sec:guarantees} the sufficient conditions for $\psi^u$ to be consistent.

\paragraph{Orthogonal Procrustes matching}
\label{sec:procrustes}
One issue with the smallest angle matching is that each column is paired independently. It is easy for multiple columns to be paired with a common nearest neighbor. We describe a more robust algorithm based on the orthogonal Procrustes problem, and show improved guarantees. Since a consistent permutation is orthogonal, a natural relaxation is to only require the operator to be orthogonal. This is an orthogonal Procrustes problem, formulated in the same settings as \cref{sec:angle}
\begin{align}
\min_{\bm{\Psi}} \left\|\bm{\bar{\theta}}_j^\prime \bm{\Psi} - \psi^j \bm{\bar{\theta}}_j\right\|_F^2,
\quad \text{s.t.}\;
\bm{\Psi}^\top \bm{\Psi} = \bm{I}. \label{eq:procrustes}
\end{align}
Let $\bm{\bar{\theta}}_j^{\prime\top} \psi^j \bm{\bar{\theta}}_j = \bm{U} \bm{\Sigma} \bm{V}^\top$ be the singular value decomposition (SVD), the solution $\bm{\Psi}^*$ is given by the polar factor \cite{Procrustes66}
\begin{align}
\bm{\Psi}^* &= \bm{U} \bm{V}^\top. \label{eq:procrustes-sol}
\end{align}
Here, $\bm{\Psi}^*$ is orthogonal and does not immediately imply the desired permutation $\psi^u$. To compute $\psi^u$, one can additionally restrict $\bm{\Psi}$ to be a permutation matrix, and solve for $\psi^u$ using linear programming \cite{Gower04}. Aside from efficiency, one fundamental question is that under what assumptions the objective \eqref{eq:procrustes} yields the consistent permutation.

Given the solution $\bm{\Psi}^*$ to the Procrustes problem, we propose the following simple algorithm for computing $\psi^u$:
\begin{align}
\psi_s^u = \arg \max_t \bm{\Psi}_{ts}^*
\label{eq:opm}.
\end{align}
We first establish through \cref{thm:max-opt} that if $\psi^u$ obtained using \eqref{eq:opm} is a valid permutation, i.e., no duplicate entries, then it is optimal in terms of the objective \eqref{eq:procrustes}.
\begin{theorem}
\label{thm:max-opt}
The $\psi^u$ obtained using $\eqref{eq:opm}$ satisfies
\begin{align*}
\left\|\psi^u\bm{\bar{\theta}}_j^\prime - \psi^j \bm{\bar{\theta}}_j\right\|_F^2 \leq \left\|\psi\bm{\bar{\theta}}_j^\prime - \psi^j \bm{\bar{\theta}}_j\right\|_F^2
\end{align*}
for all permutations $\psi$.
\end{theorem}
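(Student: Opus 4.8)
The plan is to convert the squared‑norm inequality into a linear comparison over permutation matrices and then read off optimality from the two defining properties of $\bm{\Psi}^*$: that it is the orthogonal Procrustes optimum of the cross‑Gram matrix, and that $\psi^u$ is its column‑wise argmax. First I would introduce the permutation matrix $\bm{P}_\psi$ with a $1$ in position $(\psi_s,s)$, so that $\psi\bm{\bar{\theta}}_j^\prime=\bm{\bar{\theta}}_j^\prime\bm{P}_\psi$, and write $\bm{N}:=\bm{\bar{\theta}}_j^{\prime\top}\psi^j\bm{\bar{\theta}}_j$, whose SVD $\bm{U}\bm{\Sigma}\bm{V}^\top$ produces $\bm{\Psi}^*=\bm{U}\bm{V}^\top$ via \eqref{eq:procrustes-sol}. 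Expanding
\[
\left\|\psi\bm{\bar{\theta}}_j^\prime-\psi^j\bm{\bar{\theta}}_j\right\|_F^2=\left\|\bm{\bar{\theta}}_j^\prime\bm{P}_\psi\right\|_F^2-2\tr\!\left(\bm{P}_\psi^\top\bm{N}\right)+\left\|\psi^j\bm{\bar{\theta}}_j\right\|_F^2,
\]
and using that every column of $\bm{\bar{\theta}}_j^\prime$ and of $\bm{\bar{\theta}}_j$ is a unit vector (so both quadratic terms equal $k$ for any permutation, since permuting columns preserves the Frobenius norm), the claim reduces to showing $\tr(\bm{P}_{\psi^u}^\top\bm{N})\ge\tr(\bm{P}_\psi^\top\bm{N})$, i.e. $\sum_s\bm{N}_{\psi^u_s,\,s}\ge\sum_s\bm{N}_{\psi_s,\,s}$, for every permutation $\psi$.

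Next I would record three facts: $\bm{\Psi}^*$ maximizes $\langle\bm{Q},\bm{N}\rangle$ over all orthogonal $\bm{Q}$ (with optimum $\sum_i\sigma_i$); one has the polar factorization $\bm{N}=\bm{\Psi}^*\bm{H}$ with $\bm{H}=\bm{V}\bm{\Sigma}\bm{V}^\top\succeq 0$ symmetric; and \eqref{eq:opm} selects in each column $s$ the row index attaining $\max_t\bm{\Psi}^*_{ts}$. Under the hypothesis that these row indices are distinct, $\psi^u$ is therefore a genuine permutation satisfying $\langle\bm{P}_{\psi^u},\bm{\Psi}^*\rangle=\sum_s\max_t\bm{\Psi}^*_{ts}\ge\sum_s\bm{\Psi}^*_{\psi_s,\,s}=\langle\bm{P}_\psi,\bm{\Psi}^*\rangle$ for every permutation $\psi$; equivalently, $\bm{P}_{\psi^u}$ is the permutation matrix closest to the Procrustes optimum $\bm{\Psi}^*$ in Frobenius norm.

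The remaining step — which I expect to be the real obstacle — is to transfer optimality from $\bm{\Psi}^*$ to $\bm{N}=\bm{\Psi}^*\bm{H}$. Writing $\langle\bm{P}_\psi,\bm{N}\rangle=\langle\bm{\Psi}^{*\top}\bm{P}_\psi,\bm{H}\rangle$ and diagonalizing $\bm{H}=\sum_i\lambda_i\bm{w}_i\bm{w}_i^\top$ with $\lambda_i\ge0$, one gets $\langle\bm{P}_\psi,\bm{N}\rangle=\sum_i\lambda_i\langle\bm{\Psi}^{*\top}\bm{P}_\psi\bm{w}_i,\bm{w}_i\rangle$, whereas the previous step only controls the unweighted sum $\sum_i\langle\bm{\Psi}^{*\top}\bm{P}_\psi\bm{w}_i,\bm{w}_i\rangle=\tr(\bm{\Psi}^{*\top}\bm{P}_\psi)=\langle\bm{P}_\psi,\bm{\Psi}^*\rangle$. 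To close the gap I would argue, using $\bm{H}\succeq 0$ together with the normalization constraint $|\bm{N}_{st}|\le 1$ (which controls the off‑diagonal entries of $\bm{H}$ against its diagonal), that no admissible spectral reweighting of $\bm{H}$ can reverse the ordering obtained above once $\psi^u$ is a valid permutation — equivalently, that the rounding $\bm{P}_{\psi^u}$ of the global orthogonal optimum $\bm{\Psi}^*$ already attains, among permutations, the value of the orthogonal Procrustes optimum, after which the inequality for all permutations is immediate since permutations are orthogonal. Making this "valid column‑wise argmax $\Rightarrow$ attains the orthogonal optimum" implication precise, most likely through a rearrangement estimate on the spectrum of $\bm{H}$, is the crux of the argument.
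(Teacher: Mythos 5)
Your reduction of \cref{thm:max-opt} to the linear comparison $\tr(\bm{P}_{\psi^u}^\top\bm{N})\geq\tr(\bm{P}_\psi^\top\bm{N})$ over permutation matrices, with $\bm{N}=\bm{\bar{\theta}}_j^{\prime\top}\psi^j\bm{\bar{\theta}}_j$, is correct and coincides with the first half of the paper's argument. The problem is the step you yourself flag as the crux: transferring optimality from the polar factor $\bm{\Psi}^*=\bm{U}\bm{V}^\top$ to $\bm{N}=\bm{\Psi}^*\bm{H}$. You do not supply that step, and it cannot be supplied, because the implication is false. Take
\begin{align*}
\bm{N}=\begin{pmatrix}0.5&0.95\\0.2&0.5\end{pmatrix},
\end{align*}
which is realizable as $\bm{\bar{\theta}}_j^{\prime\top}\psi^j\bm{\bar{\theta}}_j$ with unit-norm columns in $\mathbb{R}^4$: taking both within-matrix column inner products equal to $1/2$ makes the resulting $4\times 4$ Gram matrix positive definite (and its entries are nonnegative, so the columns can even be chosen entrywise nonnegative). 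Its singular values are $1.2$ and $0.05$, and its polar factor is
\begin{align*}
\bm{\Psi}^*=\begin{pmatrix}0.8&0.6\\-0.6&0.8\end{pmatrix},
\end{align*}
so \eqref{eq:opm} returns the identity permutation, which is valid (no duplicates). Yet $\tr\bm{N}=1.0<1.15=\bm{N}_{12}+\bm{N}_{21}$, so the transposition strictly beats $\psi^u$ in the objective of \cref{thm:max-opt}. The weighted rearrangement you hope for---controlling $\langle\bm{P}_\psi,\bm{\Psi}^*\bm{H}\rangle$ by $\langle\bm{P}_\psi,\bm{\Psi}^*\rangle$ using $\bm{H}\succeq 0$ and the bound $|\bm{N}_{st}|\leq 1$---breaks down exactly when $\bm{H}$ has very unequal eigenvalues, as here.

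You should also know that your diagnosis of where the difficulty sits is sharper than the paper's own proof, which fails at the same point. After writing $\bm{\Psi}=\bm{U}\bm{V}^\top+\bm{E}$, the paper asserts that the problem ``becomes'' $\min_{\bm{E}}\tr\bm{E}^\top\bm{E}\bm{\Sigma}$ and then minimizes column by column. But the matrix form of the constraint $\bm{\Psi}^\top\bm{\Psi}=\bm{I}$ gives $\bm{V}^\top\bm{E}^\top\bm{U}+\bm{U}^\top\bm{E}\bm{V}=-\bm{V}^\top\bm{E}^\top\bm{E}\bm{V}$, so the correct surrogate is $\tr\left(\bm{E}^\top\bm{E}\,\bm{V}\bm{\Sigma}\bm{V}^\top\right)$; the off-diagonal entries of $\bm{E}^\top\bm{E}$ cannot be dropped, and the subsequent columnwise minimization over non-orthogonal column selections is not covered by the constraint at all. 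In the example above the paper's surrogate prefers the identity ($0.5$ versus $3.88$) while the true objective prefers the transposition ($1.7$ versus $2$ in squared Frobenius error). So your proposal is not a proof---but neither is the paper's: the statement needs an extra hypothesis (some control on $\sigma_1(\bm{N})/\sigma_k(\bm{N})$, or a nearness condition of the kind imposed in \cref{thm:pert-procrustes}) before the rounding \eqref{eq:opm} can be certified optimal among permutations.
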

\begin{proof}
First, rewrite the objective \eqref{eq:procrustes} as follows
\begin{align}
\left\|\bm{\bar{\theta}}_j^\prime \bm{\Psi} - \psi^j \bm{\bar{\theta}}_j\right\|_F^2
&= \tr\bm{\Psi}^\top \bm{\bar{\theta}}_j^{\prime\top} \bm{\bar{\theta}}_j^\prime \bm{\Psi}
+ \tr\left(\psi^j\bm{\bar{\theta}}_j\right)^\top \psi^j\bm{\bar{\theta}}_j
- 2\tr \bm{\Psi}^\top \bm{\bar{\theta}}_j^{\prime\top}  \psi^j\bm{\bar{\theta}}_j \notag\\
&= \left\|\bm{\bar{\theta}}_j^\prime\right\|_F^2 + \left\|\bm{\bar{\theta}}_j\right\|_F^2
- 2\tr \bm{\Psi}^\top \bm{\bar{\theta}}_j^{\prime\top}  \psi^j\bm{\bar{\theta}}_j. \label{eq:procrustes-trace}
\end{align}
Recall the SVD $\bm{\bar{\theta}}_j^{\prime\top} \psi^j\bm{\bar{\theta}}_j = \bm{U}\bm{\Sigma}\bm{V}^\top$, and write $\bm{\Psi} = \bm{U}\bm{V}^\top + \bm{E}$. Keeping only terms that depend on $\bm{E}$ in \eqref{eq:procrustes-trace} to obtain $-2\tr \bm{E}^\top \bm{U} \bm{\Sigma} \bm{V}^\top$. Thus, the optimization \eqref{eq:procrustes} is equivalent to
\begin{align*}
\max_{\bm{E}} \tr \bm{V}^\top \bm{E}^\top \bm{U} \bm{\Sigma},
\quad \text{s.t.} \quad
\left(\bm{U}\bm{V}^\top + \bm{E}\right)^\top \left(\bm{U}\bm{V}^\top + \bm{E}\right) = \bm{I}.
\end{align*}
From the constraint, we obtain $\tr \bm{E}^\top\bm{E} = -2\tr \bm{V}^\top \bm{E}^\top \bm{U}$. The optimization now becomes
\begin{align}
\min_{\bm{E}} \tr \bm{E}^\top\bm{E}\bm{\Sigma} = \min_{\bm{E}} \sum_j \left(\bm{E}^\top \bm{E}\right)_{jj} \bm{\Sigma}_{jj}. \label{eq:procrustes-err}
\end{align}
Let us now restrict each column of $\bm{\Psi}$ to be in $\left\{ \bm{e}_j \;|\; j=1,2,\cdots,k \right\}$, but not necessarily distinct. Suppose that $\bm{\Psi}_j = \bm{e}_y$. We have that $\bm{E}_j = \bm{e}_y - \left(\bm{U}\bm{V}^\top\right)_j$. Clearly, \eqref{eq:procrustes-err} and hence \eqref{eq:procrustes} are minimized with $y = \arg\max_t \left(\bm{U}\bm{V}^\top\right)_{tj}$. 
\end{proof}

In section \cref{sec:guarantees} we state sufficient conditions under which the objective \eqref{eq:procrustes} yields a consistent permutation.

\subsection{Approximate nonnegative factorization}
\label{sec:objs}
In previous sections, we reduced the inference problem to factorizing partitioned sub-tensors. We now present a factorization algorithm for the sub-tensors that contain negative entries. Our goal is to approximate a sub-tensor $\mathcal{M}$ by a sub-tensor $\widetilde{\mathcal{M}} = \sum_j \bm{A}_j \times \bm{B}_j \times \bm{C}_j$ where the factors $\bm{A}$, $\bm{B}$, and $\bm{C}$ are nonnegative. The Frobenius norm is used to quantify the approximation
\begin{align}
 \min_{\bm{A}, \bm{B}, \bm{C} \succeq 0}
 \left\|\mathcal{M} - \widetilde{\mathcal{M}}\right\|_F.
 \label{eq:ls-obj}
\end{align}
Note that we do not assume that $\mathcal{M} \succeq 0$ in \eqref{eq:ls-obj} which distinguishes our optimization problem 
from other approximate factorization algorithms \cite{Welling01,Chi12,Kim14,Shashua05}. In \cref{sec:issue-neg}, we provide some details as to why negative entries are problematic for standard approximate factorization algorithms.
We can rewrite \eqref{eq:ls-obj} using the 1-mode unfolding as
\begin{align}
\min_{\bm{A}, \bm{B}, \bm{C} \succeq 0} & \left\|\mathcal{M}_{\left(1\right)} - \bm{A}\left(\bm{C}\odot \bm{B}\right)^\top\right\|_F \label{eq:ls-obj-mode}.
\end{align}
Equivalent formulations with respect to the 2-mode and 3-mode unfoldings can be readily obtained from \eqref{eq:unfoldings}. 

We point out that another widely-used error measure --- the I-divergence \cite{Finesso06,Chi12} --- may not be suitable for our learning problem. The optimization using I-divergence is given by
\begin{align*}
\min_{\bm{A}, \bm{B}, \bm{C} \succeq 0}\sum_{u,v,w} & \left[\mathcal{M}_{uvw} \log \frac{\mathcal{M}_{uvw}}{\widetilde{\mathcal{M}}_{uvw}} - \mathcal{M}_{uvw} + \widetilde{\mathcal{M}}_{uvw}\right].
\end{align*}
This optimization is useful for nonnegative $\mathcal{M}$ when each entry follows a Poisson distribution. In this case, the objective is equivalent to the sum of Kullback-Leibler divergence across all entries of $\mathcal{M}$:
\begin{align*}
\sum_{u,v,w} D_{\text{KL}}\left(\text{Pois}\left(x; \mathcal{M}_{uvw}\right) \big\|\; \text{Pois}\left(x;\widetilde{\mathcal{M}}_{uvw}\right)\right).
\end{align*}
However, the Poisson assumption does not generally hold for the estimator tensor \eqref{eq:lda}.

\subsection{Handling negative entries in empirical estimators}
\label{sec:neg}
We first illustrate that factorizing a tensor with negative entries using either positive tensor factorization \cite{Welling01} or nonnegative tensor factorization \cite{Chi12,Shashua05} will either result in factors that violate 
the the nonnegativity constraint or the result of the algorithm diverges. In addition, we show that general tensor decompositions cannot enforce the factor nonnegativity even after rounding the negative entries to zero.

We then present a simple method based on weighted nonnegative matrix factorization (WNMF) \cite{Zhang96} that enforce the factor nonnegativity constraint. We further generalize this method using parallel quadratic programming (PQP) \cite{Brand11} to obtain a method with a provable convergence rate.

\paragraph{Issue of negative entries}
\label{sec:issue-neg}
If the tensor is strictly nonnegative, the optimization specified in \eqref{eq:ls-obj} can be reduced to nonnegative matrix factorization (NMF). Solvers abound for NMF including the celebrated Lee-Seung's multiplicative updates \cite{Lee01}. The reduction is done by viewing \eqref{eq:ls-obj-mode} as $\left\|\bm{Y} - \bm{W} \bm{H} \right\|_F^2$ with $\bm{Y} = \mathcal{M}_{\left(1\right)}^{jst}$, $\bm{W} = \bm{A}$, and $\bm{H} = \left(\bm{C}\odot \bm{B}\right)^\top$, and alternating
\begin{align}
W_{st} \leftarrow W_{st} \frac{\left(\bm{Y} \bm{H}^\top\right)_{st}}{\left(\bm{W}\bm{H}\bm{H}^\top\right)_{st}},
\label{eq:leeseung}
\end{align}
over each unfolding and factor matrix $\bm{W}$. Obviously, the updates may yield negative entries in $\bm{W}$ when the unfolding contains negative entries. In addition, convergence relies on the nonnegativity of the unfolding \cite{Lee01}. This issue extends to their tensor factorization variants \cite{Welling01,Chi12,Kim14} known as the positive tensor factorization and nonnegative tensor factorization. For these approaches, a naive resolution is to round negative entries of $\mathcal{\widehat{M}}^{jst}$ to $0$, this however lacks theoretical guarantees.

It is important to note that the rounding does not help general tensor decompositions like TPM. The following example illustrates that the unique decomposition (up to permutation and rescaling) of a positive tensor can contain negative entries. Consider a $2$-by-$2$-by-$2$ positive tensor, whose $1$-mode unfolding is given by
\begin{align*}
\begin{bmatrix}[c c | c c]
1 & 3 & 2 & 2\\
2 & 2 & 2 & 2
\end{bmatrix},
\end{align*}
where the vertical bar separates two frontal slices. It has the following decomposition, written in the form of \eqref{eq:cp}:
\begin{align*}
\bm{A} = \bm{C} = \begin{bmatrix}
1 & 1\\
1 & 0
\end{bmatrix}, \quad
\bm{B} = \begin{bmatrix}
2 & -1\\
2 & 1
\end{bmatrix}.
\end{align*}
Since all factors are of full-rank, the decomposition is unique up to permutation and rescaling of columns \cite{Kruskal77}. Thus, a general tensor decomposition yields a $\bm{B}$ with negative entries regardless of rescaling.

\subsection{Factorization via WNMF}
Since the ground-truth $\mathcal{M}^{jst}$ are nonnegative, we may ``ignore" the negative entries of $\widehat{\mathcal{M}}^{jst}$ by treating them as missing values. This idea leads to the following modified objective:
\begin{align}
\min_{\bm{W},\bm{H} \succeq 0} \left\|\bm{\Omega} * \left(\bm{Y} - \bm{W} \bm{H}\right)\right\|_F^2 \label{eq:wnmf}
\end{align}
where $\bm{Y}$, $\bm{W}$, $\bm{H}$ are chosen identically as \eqref{eq:leeseung}, and we define
\begin{align*}
\Omega_{uv} = \begin{cases}
1, \quad Y_{uv} \geq 0\\
0, \quad Y_{uv} < 0
\end{cases}.
\end{align*}
The optimization can be carried out using WNMF. Here, we modify the original updates by introducing a positive constant $\epsilon$ to ensure that the updates are well-defined:
\begin{align}
W_{uv} \leftarrow W_{uv} \frac{\left[\left(\bm{\Omega}*\bm{Y}\right) \bm{H}^\top\right]_{uv} + \epsilon}{\left[\left(\left(\bm{W}\bm{H}\right)*\bm{\Omega}\right) \bm{H}^\top\right]_{uv} + \epsilon},
\quad
H_{uv} \leftarrow H_{uv} \frac{\left[\bm{W}^\top \left(\bm{\Omega}*\bm{Y}\right) \right]_{uv} + \epsilon}{\left[\bm{W}^\top \left(\bm{\Omega}*\left(\bm{W}\bm{H}\right)\right) \right]_{uv} + \epsilon}.
\label{eq:wnmf-updates}
\end{align}
\cref{thm:wnmf-updates} states the correctness of the modified updates \eqref{eq:wnmf-updates}.

\begin{theorem}
\label{thm:wnmf-updates}
The objective \eqref{eq:wnmf} is non-increasing under the multiplicative updates \eqref{eq:wnmf-updates}.
\end{theorem}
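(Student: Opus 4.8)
The plan is to establish \cref{thm:wnmf-updates} by the auxiliary-function (majorization-minimization) technique of Lee and Seung \cite{Lee01}, adapted to the masked, $\epsilon$-regularized updates. First I would reduce the claim to a statement about the $\bm{H}$-update alone: since the objective \eqref{eq:wnmf} satisfies $\left\|\bm{\Omega} * \left(\bm{Y} - \bm{W}\bm{H}\right)\right\|_F^2 = \left\|\bm{\Omega}^\top * \left(\bm{Y}^\top - \bm{H}^\top\bm{W}^\top\right)\right\|_F^2$, the $\bm{W}$-update in \eqref{eq:wnmf-updates} is exactly an $\bm{H}$-update for the transposed factorization, so it suffices to show the $\bm{H}$-update does not increase the objective for fixed $\bm{W}\succeq 0$; the two updates are applied in sequence, so the objective is non-increasing overall. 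Next I would observe that $\left\|\bm{\Omega}*\left(\bm{Y} - \bm{W}\bm{H}\right)\right\|_F^2 = \sum_v F_v\left(\bm{H}_v\right)$ splits over the columns $\bm{H}_v$ of $\bm{H}$, with $F_v\left(\bm{h}\right) = \sum_u \Omega_{uv}\bigl(Y_{uv} - \left(\bm{W}\bm{h}\right)_u\bigr)^2$, and argue for one fixed column.

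Then I would record the structure of $F_v$. It is a convex quadratic with (half-)Hessian $\bm{M} = \bm{W}^\top\text{diag}\left(\bm{\Omega}_v\right)\bm{W}$, and its gradient satisfies $\tfrac12\nabla F_v\left(\bm{h}\right) = \bm{M}\bm{h} - \bm{W}^\top\left(\bm{\Omega}_v * \bm{Y}_v\right)$, where \emph{both} terms are nonnegative for $\bm{h}\succeq 0$: this is the key place the mask matters, since $\bm{\Omega}*\bm{Y}\succeq 0$ by construction of $\bm{\Omega}$ (it zeros out exactly the negative entries of $\bm{Y}$) even though $\bm{Y}\not\succeq 0$, and $\bm{W}\succeq 0$. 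A short computation then rewrites the $\bm{H}$-update as the scaled-gradient step $\bm{h}^{t+1} = \bm{h}^t - \bm{K}\left(\bm{h}^t\right)^{-1}\nabla F_v\left(\bm{h}^t\right)$ with the diagonal matrix $\bm{K}\left(\bm{h}^t\right) = \text{diag}\bigl(2\left(\bm{M}\bm{h}^t\right)_a/h^t_a\bigr) + \text{diag}\bigl(2\epsilon/h^t_a\bigr)$, the second summand coming from the identical $+\epsilon$ in numerator and denominator. I would remark in passing that this $\epsilon$ cancels in the fixed-point equation $\left(\bm{M}\bm{h}\right)_a = \bigl(\bm{W}^\top\left(\bm{\Omega}_v*\bm{Y}_v\right)\bigr)_a$, so the regularization does not move stationary points; its only role is to keep the step well-defined when a denominator would vanish.

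The heart of the argument is the majorization inequality: $\bm{K}\left(\bm{h}^t\right) - 2\bm{M}$ is positive semidefinite. I would split off the $\text{diag}\bigl(2\epsilon/h^t_a\bigr)$ piece (which is positive semidefinite outright) and handle $\text{diag}\bigl(2\left(\bm{M}\bm{h}^t\right)_a/h^t_a\bigr) - 2\bm{M}$ with the classical Lee-Seung diagonal-dominance identity $\bm{x}^\top\bigl(\text{diag}(\left(\bm{M}\bm{h}\right)_a/h_a) - \bm{M}\bigr)\bm{x} = \tfrac12\sum_{a,b}M_{ab}h_ah_b\bigl(x_a/h_a - x_b/h_b\bigr)^2 \geq 0$, valid because $\bm{M}$ has nonnegative entries and $\bm{h}^t\succ 0$ on its support (entries that are zero stay zero under the multiplicative update, so one restricts to the support throughout). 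Granting this, $G\left(\bm{h},\bm{h}^t\right) = F_v\left(\bm{h}^t\right) + \left(\bm{h}-\bm{h}^t\right)^\top\nabla F_v\left(\bm{h}^t\right) + \tfrac12\left(\bm{h}-\bm{h}^t\right)^\top\bm{K}\left(\bm{h}^t\right)\left(\bm{h}-\bm{h}^t\right)$ is an auxiliary function for $F_v$: it agrees with $F_v$ at $\bm{h}^t$, and $G - F_v = \tfrac12\left(\bm{h}-\bm{h}^t\right)^\top\left(\bm{K}\left(\bm{h}^t\right) - 2\bm{M}\right)\left(\bm{h}-\bm{h}^t\right)\geq 0$ since $F_v$ is quadratic. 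Because $G\left(\cdot,\bm{h}^t\right)$ is a separable convex quadratic, its coordinatewise minimizer is $\bm{h}^{t+1}$, which moreover is nonnegative (numerator and denominator in the update are both $\geq\epsilon>0$), so $\bm{h}^{t+1}$ also minimizes $G\left(\cdot,\bm{h}^t\right)$ over the nonnegative orthant; hence $F_v\left(\bm{h}^{t+1}\right) \leq G\left(\bm{h}^{t+1},\bm{h}^t\right) \leq G\left(\bm{h}^t,\bm{h}^t\right) = F_v\left(\bm{h}^t\right)$. Summing over $v$ gives the claim for $\bm{H}$, and transposition gives it for $\bm{W}$.

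The gradient/Hessian bookkeeping and the diagonal-dominance identity are routine and standard. The genuinely new point to get right is the interplay of the $\epsilon$ terms with zero entries of the iterates: one must check that $\epsilon$ enters symmetrically enough that it merely enlarges $\bm{K}\left(\bm{h}^t\right)$ in the Loewner order --- preserving the majorization while leaving the followed gradient unchanged --- and that coordinates which reach zero can be excised from the argument without harm. The conceptual crux, by contrast, is simply the observation that $\bm{\Omega}*\bm{Y}\succeq 0$, which is what lets the masked problem be treated like an ordinary nonnegative least-squares problem even though $\bm{Y}$ itself has negative entries.
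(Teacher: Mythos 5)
Your proposal is correct and follows essentially the same route as the paper: both reduce to the $\bm{H}$-update via transposition, split the objective over columns, and build the Lee--Seung-style diagonal quadratic auxiliary function whose minimizer reproduces the $\epsilon$-regularized multiplicative update, with the mask handled by the observation that $\bm{\Omega}*\bm{Y}\succeq 0$. The only cosmetic difference is that you certify $\bm{K}-2\bm{M}\succeq 0$ via the explicit sum-of-squares identity while the paper uses diagonal dominance of the congruence-scaled matrix $\text{diag}\left(\bm{h}^t\right)\left(\bm{K}-2\bm{M}\right)\text{diag}\left(\bm{h}^t\right)$; these are interchangeable, and your added care about zero coordinates of the iterate is a welcome but minor refinement.
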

\begin{proof}
We prove the update for $\bm{H}$, and the update for $\bm{W}$ follows by applying the update to $\left\|\bm{\Omega}^\top * \left(\bm{v}^\top - \bm{H}^\top \bm{W}^\top\right)\right\|_F$. First, consider the error Frobenius norm for a column $\bm{h}$ of $\bm{H}$, and the corresponding columns $\bm{\omega}$ of $\bm{\Omega}$ and $\bm{v}$ of $\bm{V}$,
\begin{align*}
F\left(\bm{h}\right) = \left\|\bm{\omega} * \left(\bm{v} - \bm{W} \bm{h}\right)\right\|_F^2.
\end{align*}
The following $G\left(\cdot,\cdot\right)$ is an auxiliary function of $F\left(\cdot\right)$:
\begin{align}
G\left(\bm{h},\bm{h}^t\right) = F\left(\bm{h}\right) + \left(\bm{h} - \bm{h}^t\right)^\top \nabla F\left(\bm{h}^t\right) + \frac{1}{2} \left(\bm{h} - \bm{h}^t\right)^\top \bm{K} \left(\bm{h} - \bm{h}^\top \right), \label{eq:aux}
\end{align}
where we define
\begin{align*}
\bm{K} = \text{diag}\left(\left[
\bm{W}^\top \text{diag}\left(\bm{\omega}\right) \bm{W} \bm{h}^t + \epsilon \bm{1}
\right] \oslash \bm{h}^t
\right).
\end{align*}
Clearly, $G\left(\bm{h},\bm{h}\right) = F\left(\bm{h}\right)$, and one can show that $G\left(\bm{h},\bm{h}^t\right) \geq F\left(\bm{h}\right)$ by rewriting
\begin{align}
F\left(\bm{h}\right) = F\left(\bm{h}\right) + \left(\bm{h} - \bm{h}^t\right)^\top \nabla F\left(\bm{h}^t\right) + \frac{1}{2} \left(\bm{h} - \bm{h}^t\right)^\top \bm{W}^\top \text{diag}\left(\bm{\omega}\right) \bm{W} \left(\bm{h} - \bm{h}^\top \right), \label{eq:taylor}
\end{align}
where we note that $\bm{\omega}*\bm{\omega} = \bm{\omega}$ from the Boolean definition of $\bm{\omega}$. Comparing \eqref{eq:aux} with \eqref{eq:taylor}, it is sufficient to show that $\bm{K} - \bm{W}^\top \text{diag}\left(\bm{\omega}\right) \bm{W}$ is positive semi-definite. Now consider the scaled matrix
\begin{align*}
\bm{U} &= \text{diag}\left(\bm{h}^t\right) \bm{K} \text{diag}\left(\bm{h}^t\right) - 
\text{diag}\left(\bm{h}^t\right) \bm{W}^\top \text{diag}\left(\bm{\omega}\right) \bm{W} \text{diag}\left(\bm{h}^t\right)\\
&= \text{diag}\left(\bm{W}^\top \text{diag}\left(\bm{\omega}\right) \bm{W} \bm{h}^t + \epsilon \bm{1}\right) \text{diag}\left(\bm{h}^t\right) - 
\text{diag}\left(\bm{h}^t\right) \bm{W}^\top \text{diag}\left(\bm{\omega}\right) \bm{W} \text{diag}\left(\bm{h}^t\right).
\end{align*}
Observe that $\bm{U}$ is strictly diagonally dominant as $\bm{U}\bm{1} \succ 0$ and the off-diagonal entries are negative. Also note that all diagonal entries of $\bm{U}$ are positive, it follows that $\bm{U}$ is positive semi-definite. We thereby conclude that $\bm{K} - \bm{W}^\top \text{diag}\left(\bm{\omega}\right) \bm{W}$ is positive semi-definite.

Let $\bm{h}^{t+1} = \arg \min_{\bm{h}} G\left(\bm{h},\bm{h}^t\right)$, we have that $F\left(\bm{h}^t\right) = G\left(\bm{h}^t,\bm{h}^t\right) \geq G\left(\bm{h}^{t+1},\bm{h}^t\right) \geq F\left(\bm{h}^{t+1}\right)$. The minimizer $\bm{h}^{t+1}$ is obtained by setting $\nabla_{\bm{h}^{t+1}} G\left(\bm{h}^{t+1},\bm{h}^t\right) = 0$, which yields
\begin{align*}
- \nabla F\left(\bm{h}^t\right) &= \bm{K} \left(\bm{h}^{t+1} - \bm{h}^t\right)\\
\bm{W}^\top \left[\bm{\omega}*\left(\bm{v} - \bm{W} \bm{h}^t\right)\right] 
&= \bm{K} \bm{h}^{t+1} - \bm{W}^\top \text{diag}\left(\bm{\omega}\right) \bm{W} \bm{h}^t - \epsilon \bm{1}\\
\bm{h}^{t+1} &= \bm{h}^t * \left[\bm{W}^\top \left(\bm{\omega}*\bm{v}\right) + \epsilon\bm{1} \right] \oslash
\left[\bm{W}^\top \left(\bm{\omega} * \left(\bm{W} \bm{h}^t\right)\right) + \epsilon \bm{1}\right].
\end{align*}
The particular choice of $\bm{\Omega}$ guarantees that $\bm{h}^{t+1}$ is always positive.
\end{proof}

\subsection{Parallel quadratic programming}
We now generalize the WNMF approach using parallel quadratic programming to obtain a convergence rate. Let $\mathbb{S}_{++}$ denote the set of symmetric positive definite matrices, we consider the following optimization problem
\begin{align}
\min_{\bm{x}} \frac{1}{2} \bm{x}^\top \bm{Q} \bm{x} + \bm{z}^\top \bm{x} \quad \text{s.t.} \quad \bm{x} \geq 0, \; \bm{Q} \in \mathbb{S}_{++}, \label{eq:nqp}
\end{align}
which can be solved by iterating multiplicative updates \cite{Brand11,Sha03}. We use the parallel quadratic programming (PQP) algorithm \cite{Brand11,Brand11a} to solve \eqref{eq:nqp}, partly because it has a provable linear convergence rate. The PQP multiplicative update for \eqref{eq:nqp} takes the following simple form:
\begin{align}
\bm{x} \leftarrow \bm{x} * \left(\bm{Q}^- \bm{x} + \bm{z}^-\right) \oslash \left(\bm{Q}^+ \bm{x} + \bm{z}^+\right), \label{eq:pqp}
\end{align}
with
\begin{align*}
\bm{Q}^+ &= \left(\bm{Q}\right)_+ + \text{diag}\left(\bm{\gamma}\right), \quad &\bm{Q}^- &= \left(-\bm{Q}\right)_+ + \text{diag}\left(\bm{\gamma}\right)\\
\bm{z}^+ &= \left(\bm{z}\right)_+ + \bm{\phi}, \quad &\bm{z}^- &= \left(-\bm{z}\right)_+ + \bm{\phi}.
\end{align*}
Here $\bm{\gamma}$ and $\bm{\phi}$ are arguments to PQP, we will discuss these arguments in section \cref{sec:convergence}. The update maintains nonnegativity since all items are nonnegative. We make the following observation.

\begin{theorem}
\label{thm:leeseung-pqp}
The multiplicative updates for Lee-Seung and WNMF are special cases of PQP.
\end{theorem}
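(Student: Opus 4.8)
The plan is to exhibit, for each of the two algorithms, an explicit instance of the nonnegative quadratic program \eqref{eq:nqp} together with a choice of the PQP parameters $\bm{\gamma}$ and $\bm{\phi}$ for which the update \eqref{eq:pqp} collapses to the stated multiplicative rule. The key structural observation is that, with one factor matrix held fixed, the objective $\left\|\bm{Y}-\bm{W}\bm{H}\right\|_F^2$ and its weighted variant \eqref{eq:wnmf} decouple across the columns of the free factor: each column $\bm{h}$ of $\bm{H}$ minimizes $\left\|\bm{\omega}*\left(\bm{v}-\bm{W}\bm{h}\right)\right\|_F^2$ for the corresponding columns $\bm{v}$ of $\bm{Y}$ and $\bm{\omega}$ of $\bm{\Omega}$, and symmetrically each row of $\bm{W}$ minimizes the analogous objective with dictionary $\bm{H}^\top$ (this is exactly the transposition trick already used in the proof of \cref{thm:wnmf-updates}). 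So it suffices to treat one such column subproblem and verify the PQP update matches.

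For WNMF, expanding the column objective and using $\bm{\omega}*\bm{\omega}=\bm{\omega}$ gives $\left\|\bm{\omega}*\left(\bm{v}-\bm{W}\bm{h}\right)\right\|_F^2 = \bm{h}^\top\bm{W}^\top\text{diag}\left(\bm{\omega}\right)\bm{W}\bm{h} - 2\left(\bm{W}^\top\left(\bm{\omega}*\bm{v}\right)\right)^\top\bm{h} + \text{const}$, which is of the form \eqref{eq:nqp} with $\bm{Q}=2\bm{W}^\top\text{diag}\left(\bm{\omega}\right)\bm{W}$ and $\bm{z}=-2\bm{W}^\top\left(\bm{\omega}*\bm{v}\right)$. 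Since $\bm{W}\succeq 0$ and $\bm{\omega}\succeq 0$, every entry of $\bm{Q}$ is nonnegative and every entry of $\bm{z}$ is nonpositive, so $\left(\bm{Q}\right)_+=\bm{Q}$, $\left(-\bm{Q}\right)_+=\bm{0}$, $\left(\bm{z}\right)_+=\bm{0}$, and $\left(-\bm{z}\right)_+=2\bm{W}^\top\left(\bm{\omega}*\bm{v}\right)$. Taking $\bm{\gamma}=\bm{0}$ and $\bm{\phi}=2\epsilon\bm{1}$ makes the ingredients of \eqref{eq:pqp} into $\bm{Q}^+=\bm{Q}$, $\bm{Q}^-=\bm{0}$, $\bm{z}^+=2\epsilon\bm{1}$, $\bm{z}^-=2\bm{W}^\top\left(\bm{\omega}*\bm{v}\right)+2\epsilon\bm{1}$, whence \eqref{eq:pqp} reads $\bm{h}\leftarrow\bm{h}*\left(2\bm{W}^\top\left(\bm{\omega}*\bm{v}\right)+2\epsilon\bm{1}\right)\oslash\left(2\bm{W}^\top\left(\bm{\omega}*\left(\bm{W}\bm{h}\right)\right)+2\epsilon\bm{1}\right)$, using $\bm{W}^\top\text{diag}\left(\bm{\omega}\right)\bm{W}\bm{h}=\bm{W}^\top\left(\bm{\omega}*\left(\bm{W}\bm{h}\right)\right)$. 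Cancelling the common factor $2$ recovers the WNMF update \eqref{eq:wnmf-updates} for $\bm{H}$ exactly, and the rule for $\bm{W}$ follows identically after transposition. Specializing to $\bm{\omega}=\bm{1}$ (so $\text{diag}\left(\bm{\omega}\right)=\bm{I}$) and $\bm{\phi}=\bm{0}$ gives $\bm{h}\leftarrow\bm{h}*\left(\bm{W}^\top\bm{v}\right)\oslash\left(\bm{W}^\top\bm{W}\bm{h}\right)$, i.e. the Lee-Seung update \eqref{eq:leeseung}.

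I do not expect a serious obstacle: the argument is a matching of parameters plus a sign bookkeeping that makes the PQP positive/negative splitting trivial, and the one point to get right is the factor of $2$ linking the $\tfrac12$ in \eqref{eq:nqp} to the $\epsilon$ appearing in \eqref{eq:wnmf-updates}. The only technical wrinkle is that the hypothesis $\bm{Q}\in\mathbb{S}_{++}$ in \eqref{eq:nqp} is, strictly speaking, only $\bm{Q}\succeq 0$ here; this does not affect the reduction, since \eqref{eq:pqp} is well defined whenever the denominators are positive (which holds once $\epsilon>0$, or once each column of $\bm{W}$ has a positive entry on the $\bm{\omega}$-support), but to remain inside the stated setting I would either assume $\bm{W}$ has full column rank on that support, making $\bm{Q}$ strictly positive definite, or add an infinitesimal ridge $\bm{\gamma}=\delta\bm{1}$ and pass to the limit $\delta\downarrow 0$.
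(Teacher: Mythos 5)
Your proposal is correct and follows essentially the same route as the paper's proof: instantiate the per-row/per-column subproblems as nonnegative QPs, take $\bm{\gamma}=\bm{0}$ and a constant $\bm{\phi}$ playing the role of $\epsilon$, and observe that the signs of $\bm{Q}$ and $\bm{z}$ (using $\bm{\Omega}*\bm{Y}\succeq 0$ and $\bm{W}\succeq 0$) make the PQP positive/negative splitting collapse to the Lee--Seung/WNMF ratios. Your explicit sign bookkeeping and the remark that $\bm{Q}$ is in general only positive semidefinite are details the paper's terser ``some matrix algebra reveals'' argument leaves implicit, but the reduction is the same.
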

\begin{proof}
Since the WNMF \eqref{eq:wnmf} generalizes Lee-Seung, which is the case that $\bm{\Omega}$ has all ones, we need only to prove for WNMF. Let $\bm{\Lambda} = \bm{\Omega}*\bm{\Omega}$ and $\bm{\gamma} = 0$, some matrix algebra reveals the following PQP updates
\begin{align}
\begin{aligned}
W_{uv} &\leftarrow W_{uv} \frac{\left[\left(\bm{\left(\Lambda}*\bm{Y}\right) \bm{H}^\top\right)_+\right]_{uv} + \Phi_{uv}}
{\left[\left(\left(\bm{W}\bm{H}\right)*\bm{\Lambda}\right) \bm{H}^\top + \left(\left(-\bm{\Lambda}*\bm{Y}\right) \bm{H}^\top\right)_+ \right]_{uv} + \Phi_{uv}}\\
H_{uv} &\leftarrow H_{uv} \frac{\left[\left(\bm{W}^\top \left(\bm{\Lambda}*\bm{Y}\right)\right)_+\right]_{uv} + \Phi_{uv}^\prime}
{\left[\bm{W}^\top \left(\bm{\Lambda}*\left(\bm{W}\bm{H}\right) \right) + \left(-\bm{W}^\top \left(\bm{\Lambda}*\bm{Y}\right) \right)_+ \right]_{uv} + \Phi_{uv}^\prime}. \label{eq:pqp-wls}
\end{aligned}
\end{align}
Comparing \eqref{eq:pqp-wls} to \eqref{eq:wnmf-updates}, they are equivalent if $\Phi_{uv} = \Phi_{uv}^\prime = \epsilon$.
\end{proof}

We can now solve the approximate nonnegative factorization problem stated in \eqref{eq:ls-obj} using \eqref{eq:pqp}. \cref{thm:updates} states the multiplicative updates. A more detailed discussion of $\bm{\Phi}$ is included in \cref{sec:convergence}. We present pseudo-code in 
\cref{alg:tpqp}.

\begin{theorem}
\label{thm:updates}
For optimization \eqref{eq:ls-obj}, the following update converges linearly to a local optimum
\begin{align}
\bm{A} \leftarrow \bm{A} * \left[\left(-\bm{Z}\right)_+ + \bm{\Phi}\right] \oslash \left[\bm{A}\bm{Q} + \left(\bm{Z}\right)_+ + \bm{\Phi}\right] \label{eq:tpqp}
\end{align}
with
\begin{align*}
\bm{Q} = \left(\bm{C}^\top \bm{C}\right)*\left(\bm{B}^\top\bm{B}\right), 
\qquad
\bm{Z} = -\mathcal{M}_{\left(1\right)}\left(\bm{C}\odot \bm{B}\right)\\
\bm{\Phi} \succ
\frac{1}{2} \left(
\sqrt{
\frac{\text{diag}\left(\bm{Z}\bm{Q}^{-1}\bm{Z}^\top\right)}
{\lambda_{\text{min}}\left(\bm{Q}\right)}}
\text{diag}\left(\bm{Q}\right)^\top
- \left|\bm{Z}\right|
\right)_+,
\end{align*}
where $\lambda_{\text{min}}\left(\cdot\right)$ is the smallest eigenvalue. Similar updates for $\bm{B}$ and $\bm{C}$ are obtained using \eqref{eq:unfoldings}.
\end{theorem}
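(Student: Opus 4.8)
The plan is to reduce the factorization objective \eqref{eq:ls-obj-mode}, viewed as a function of the single factor matrix $\bm{A}$ with $\bm{B},\bm{C}$ fixed, to the standard nonnegative quadratic program \eqref{eq:nqp}, and then invoke the PQP update \eqref{eq:pqp} together with its known linear convergence guarantee. First I would expand $\left\|\mathcal{M}_{\left(1\right)} - \bm{A}\left(\bm{C}\odot\bm{B}\right)^\top\right\|_F^2$ and identify the quadratic and linear coefficients. Using the Khatri-Rao identity $\left(\bm{C}\odot\bm{B}\right)^\top\left(\bm{C}\odot\bm{B}\right) = \left(\bm{C}^\top\bm{C}\right)*\left(\bm{B}^\top\bm{B}\right)$, the objective becomes, up to an additive constant, $\tr\bm{A}\bm{Q}\bm{A}^\top - 2\tr\bm{A}\left(\bm{C}\odot\bm{B}\right)^\top\mathcal{M}_{\left(1\right)}^\top$, i.e. $\tr\bm{A}\bm{Q}\bm{A}^\top + 2\tr\bm{A}\bm{Z}^\top$ with $\bm{Q}$ and $\bm{Z}$ as defined in the statement. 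Since each row of $\bm{A}$ appears independently in this expression, the problem decouples over rows; for a single row $\bm{x}^\top$ of $\bm{A}$ it is exactly $\min_{\bm{x}\geq 0}\tfrac12\bm{x}^\top(2\bm{Q})\bm{x} + (2\bm{Z}_u)^\top\bm{x}$, matching \eqref{eq:nqp} provided $\bm{Q}\in\mathbb{S}_{++}$ (which holds when $\bm{B}$ and $\bm{C}$ have full column rank, an assumption implicit in the identifiable-partition setting of \cref{sec:partition}).

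Next I would write out the PQP update \eqref{eq:pqp} for this instance. With $\bm{\gamma}=\bm{0}$, the splitting gives $\bm{Q}^+ = \left(\bm{Q}\right)_+ = \bm{Q}$ (as $\bm{Q}\succeq 0$ entrywise — here one checks $\bm{Q} = \left(\bm{C}^\top\bm{C}\right)*\left(\bm{B}^\top\bm{B}\right)$ has nonnegative entries only if $\bm{B},\bm{C}\succeq 0$, which is exactly our constraint) and $\bm{Q}^- = \bm{0}$, while $\bm{z}^+ = \left(\bm{Z}_u\right)_+ + \bm{\phi}$ and $\bm{z}^- = \left(-\bm{Z}_u\right)_+ + \bm{\phi}$. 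Substituting into \eqref{eq:pqp} and re-stacking the rows yields precisely \eqref{eq:tpqp}. This is the routine bookkeeping step.

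The substantive part is the linear convergence claim, and here I would lean on the convergence theorem for PQP from \cite{Brand11,Brand11a}: that result guarantees a linear rate provided the regularization vectors $\bm{\gamma},\bm{\phi}$ are chosen large enough to dominate certain curvature/residual quantities, which is exactly the role played by the stated lower bound $\bm{\Phi}\succ\tfrac12\bigl(\sqrt{\operatorname{diag}(\bm{Z}\bm{Q}^{-1}\bm{Z}^\top)/\lambda_{\min}(\bm{Q})}\,\operatorname{diag}(\bm{Q})^\top - |\bm{Z}|\bigr)_+$. So the key step is to verify that this explicit $\bm{\Phi}$ meets the sufficient condition in \cite{Brand11}; concretely, PQP's rate condition requires $\bm{\phi}$ to bound the per-coordinate product of the largest possible iterate magnitude (controlled via $\lambda_{\min}(\bm{Q})$ and the unconstrained optimum $\bm{Q}^{-1}\bm{Z}_u^\top$, whence the $\sqrt{\operatorname{diag}(\bm{Z}\bm{Q}^{-1}\bm{Z}^\top)/\lambda_{\min}(\bm{Q})}$ term) against the diagonal curvature $\operatorname{diag}(\bm{Q})$, minus the part already supplied by $|\bm{Z}|$. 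I would state this correspondence row by row, note that the $\left(\cdot\right)_+$ makes the bound vacuous wherever $|\bm{Z}|$ already suffices, and conclude that \eqref{eq:tpqp} inherits PQP's linear convergence to a fixed point, which — since the update has a fixed point exactly at a KKT point of the row subproblem — is a local optimum of \eqref{eq:ls-obj} in the block coordinate $\bm{A}$. The analogous statements for $\bm{B}$ and $\bm{C}$ follow verbatim by swapping unfoldings via \eqref{eq:unfoldings}.

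The main obstacle I anticipate is making the bound on $\bm{\Phi}$ rigorous: translating the scalar sufficient condition of \cite{Brand11a} into the stated matrix/diagonal form requires carefully tracking which norm of the iterate is being controlled and ensuring the worst-case iterate bound $\|\bm{x}\|\leq \|\bm{Q}^{-1}\bm{Z}_u^\top\|$-type estimate is valid throughout the trajectory (not just at optimality) — i.e. one needs monotonicity or a bounded-sublevel-set argument to know the iterates never escape the region where the curvature estimate applies. Establishing that invariant, and confirming it interacts correctly with the $\lambda_{\min}(\bm{Q})$ denominator, is where the real care is needed; the rest is substitution into \eqref{eq:pqp} and citation of the PQP rate.
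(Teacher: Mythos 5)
Your proposal matches the paper's proof: the paper likewise decouples \eqref{eq:ls-obj-mode} over the rows of $\bm{A}$, identifies each row subproblem with \eqref{eq:nqp} via the Khatri--Rao identity $\left(\bm{C}\odot\bm{B}\right)^\top\left(\bm{C}\odot\bm{B}\right)=\left(\bm{C}^\top\bm{C}\right)*\left(\bm{B}^\top\bm{B}\right)$, and applies the PQP update with $\bm{\gamma}=0$, deferring the linear-rate condition on $\bm{\Phi}$ to \cref{thm:convergence}. The one caveat is that the $\bm{\phi}$ condition cannot simply be cited from \cite{Brand11} --- the paper explicitly notes that proof omits it --- but the verification you sketch (bounding $\left\|\bm{x}^*\right\|_\infty$ by $\sqrt{\bm{z}^\top\bm{Q}^{-1}\bm{z}/\lambda_{\text{min}}\left(\bm{Q}\right)}$ against $\text{diag}\left(\bm{Q}\right)$, minus the part supplied by $\left|\bm{z}\right|$) is exactly the content of the paper's \cref{thm:convergence}.
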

\begin{proof}
We apply PQP updates \eqref{eq:pqp} to each row of $\bm{A}$. Let $\bm{v}_{j:}$ and $\bm{A}_{j:}$ be the $j$-th row of $\mathcal{M}_{\left(1\right)}$ and $\bm{A}$, respectively. Fixing the current factor estimates $\bm{B}$ and $\bm{C}$, the optimization with respect to $\bm{A}_{j:}$ follows from \eqref{eq:ls-obj-mode}:
\begin{align*}
\arg\min_{\bm{A}_{j:} \succeq 0} \left\|\bm{v}_{j:}^\top - \left(\bm{C}\odot \bm{B}\right)\bm{A}_{j:}^\top\right\|_F 
= \arg\min_{\bm{A}_{j:} \succeq 0} 
\frac{1}{2} \bm{A}_{j:} \left(\bm{C}\odot \bm{B}\right)^\top \left(\bm{C}\odot \bm{B}\right) \bm{A}_{j:}^\top
- \bm{v}_{j:} \left(\bm{C}\odot \bm{B}\right) \bm{A}_{j:}^\top.
\end{align*}
Now the updates \eqref{eq:pqp} can be applied immediately, where we set $\bm{\gamma} = 0$ and $\bm{\Phi}$ according to \cref{thm:convergence} in \cref{sec:guarantees}. Using the identity $\left(\bm{C}\odot \bm{B}\right)^\top \left(\bm{C}\odot \bm{B}\right) =  \left(\bm{C}^\top \bm{C}\right)*\left(\bm{B}^\top\bm{B}\right)$ and performing the updates simultaneous for all rows of $\bm{A}$ give \eqref{eq:tpqp}.

\end{proof}

\begin{algorithm}[tb]
\caption{ Factorize $\left(\mathcal{M}, k, \bm{d}\right)$ }
\label{alg:tpqp}
\begin{algorithmic}
\STATE {$\mathcal{M} \leftarrow \mathcal{M} / \max_{uvw} \left|\mathcal{M}_{uvw}\right|, \quad \epsilon \leftarrow 10^{-10}$}
\STATE {{\em \% Initialize with random nonnegative matrices:}}
\STATE {$\bm{A} \leftarrow \text{rand}\left(d_j,k\right)$, $\bm{B} \leftarrow \text{rand}\left(d_s,k\right)$, $\bm{C} \leftarrow \text{rand}\left(d_t,k\right)$}
\STATE {{\em \% Create a set of alternating variable tuples:}}
\STATE{$F \leftarrow \left\{\left[\bm{A},
\left(\bm{C}^\top \bm{C}\right)*\left(\bm{B}^\top\bm{B}\right),
-\mathcal{M}_{\left(1\right)} \left(\bm{C}\odot \bm{B}\right)\right]\right\}$}
\STATE{$F \leftarrow F \cup \left\{\left[\bm{B},
\left(\bm{C}^\top \bm{C}\right)*\left(\bm{A}^\top\bm{A}\right),
-\mathcal{M}_{\left(2\right)} \left(\bm{C}\odot \bm{A}\right)\right]\right\}$}
\STATE{$F \leftarrow F \cup \left\{\left[\bm{C},
\left(\bm{B}^\top \bm{B}\right)*\left(\bm{A}^\top\bm{A}\right),
-\mathcal{M}_{\left(3\right)} \left(\bm{B}\odot \bm{A}\right)\right]\right\}$}
\REPEAT
\FOR{{\bf each} $\left[\bm{X}, \bm{Q}, \bm{Z}\right]$ {\bf in} $F$}
\STATE {$\bm{\Phi} \leftarrow \lambda_{\text{min}}^{-1/2} \left(\bm{Q}\right) \sqrt{\text{diag}\left(\bm{Z}\bm{Q}^{-1}\bm{Z}^\top\right)} \text{diag}\left(\bm{Q}\right)^\top$}
\STATE { $\bm{\Phi} \leftarrow \left(\bm{\Phi} - \left|\bm{Z}\right|\right)_+ / 2 + \epsilon \bm{1}\bm{1}^\top$ }
\STATE {$\bm{X} \leftarrow \bm{X} * \left[\left(-\bm{Z}\right)_+ + \bm{\Phi}\right] \oslash \left[\bm{X}\bm{Q} + \left(\bm{Z}\right)_+ + \bm{\Phi}\right]$}
\ENDFOR
\UNTIL {$\bm{X}$ ceased to change, or reached max \#iterations}
\STATE{Normalize the columns of $\bm{A}$, $\bm{B}$, $\bm{C}$ to sum to $1$.}
\STATE{{\bf return} $\bm{A}$, $\bm{B}$, $\bm{C}$}
\end{algorithmic}
\end{algorithm}

\subsection{Proposed approach}
To summarize, the proposed approach, referred to as PTPQP, consists of three steps. Given the indexes of anchor variables $\pi^u\cup\pi^v\cup\pi^w$, the variables $\left[p\right]\backslash \left(\pi^u\cup\pi^v\cup\pi^w\right)$ are first evenly divided into $r$ partitions, and the anchor variables are added to each partition. The second step consists of forming and factorizing the sub-tensor of each partition using \cref{alg:tpqp}, this step can be parallelized. Third, normalize the anchor matrix $\left[\bm{\theta}^{\pi^u\top},\bm{\theta}^{\pi^v\top},\bm{\theta}^{\pi^w\top}\right]^\top$ formed by the anchor variable parameters to have unit column Euclidean norm, and then use either \eqref{eq:sam} or \eqref{eq:opm} to match over the anchor matrix.

\paragraph{Efficiency}
Most of the computational cost is in the factorization. Consider one partition, and let $\mathcal{M}^{\pi_j \pi_s \pi_t}$ be the corresponding sub-tensor, the sub-tensor size is $\prod\limits_{\pi\in\left\{\pi^j,\pi^s,\pi^t\right\}}\sum_{h\in\pi} d_h$. The maximum number of categories for a variable is generally a constant for the GDLM. Under smallest partitioning, this size is determined by the sub-tensor of anchor variables, i.e., $O\left(k^3\right)$, which corresponds to  $\left(p/k\right)$ partitions. One benefit of PTPQP is that the number of sub-tensor factorizations is linear in $p$ due to the partitioned factorization, this results in significant efficiency gains when $p \gg k$. Furthermore, PTPQP is easy to be parallelized across multiple CPUs and machines, since the computation as well as data are not distributed across partitions. 

\section{Provable Guarantees}
\label{sec:guarantees}
In this section, we state the main theoretical results of the proposed partitioned factorization and tensor PQP factorization.

\subsection{Sufficient conditions for guaranteed matching}
\cref{thm:pert-sam} and \cref{thm:pert-procrustes} state that when the anchor parameter matrices from two factorizations are ``close", the proposed matching algorithms obtain a consistent permutation.

\begin{theorem}
\label{thm:pert-sam}
Suppose that $\bm{\theta}_j$ is the ground-truth matrix for variable $j$. Solving \eqref{eq:sam} results in a consistent permutation if for all factors $\widehat{\bm{\theta}}_j$ of variable $j$
\begin{align*}
\frac{\left\|\bm{\theta}_{jh} - \widehat{\bm{\theta}}_{jh}\right\|_2}{\left\|\bm{\theta}_{jh}\right\|_2}
< 1 - \sqrt{\frac{1}{2} + \sqrt{\frac{1}{8}\left(1+\max_{u<v} \left(\bm{\bar{\theta}}_j^\top \bm{\bar{\theta}}_j\right)_{uv}\right)}}
 \end{align*}
for all $h\in\left[k\right]$, where $\bm{\bar{\theta}}_{jh} = \bm{\theta}_{jh} / \left\|\bm{\theta}_{jh}\right\|_2$.
\end{theorem}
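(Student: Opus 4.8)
The plan is to show that under the stated perturbation bound, the matrix $\bm{\bar\theta}_j^{\prime\top}\psi^j\bm{\bar\theta}_j$ appearing in \eqref{eq:sam} is diagonally dominant after accounting for the correct permutation, so that the $\arg\max$ in each column picks out the correct component and the resulting $\psi^u$ is a valid permutation equal to the consistent one. Without loss of generality I may assume $\psi^j$ is the identity and that the consistent permutation we are trying to recover is also the identity (relabel columns), so the goal becomes: the largest entry in column $s$ of $\bm{\bar{\widehat\theta}}_j^{\top}\bm{\bar\theta}_j$ is the diagonal entry $(s,s)$, for every $s$. Here $\bm{\bar\theta}_{jh}=\bm{\theta}_{jh}/\|\bm{\theta}_{jh}\|_2$ is the normalized ground-truth column and $\bm{\bar{\widehat\theta}}_{jh}$ is the normalized perturbed column.

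First I would translate the relative error bound on $\bm{\theta}_{jh}$ into an angular bound on the normalized columns: if $\|\bm{\theta}_{jh}-\widehat{\bm{\theta}}_{jh}\|_2/\|\bm{\theta}_{jh}\|_2 =: \delta_h < \delta$, then a short computation bounds the angle $\angle(\bm{\bar\theta}_{jh},\bm{\bar{\widehat\theta}}_{jh})$, equivalently lower-bounds $\langle \bm{\bar\theta}_{jh},\bm{\bar{\widehat\theta}}_{jh}\rangle$, by something like $1-2\delta^2$ (the factor comes from normalizing both vectors; the worst case is when the perturbation is orthogonal and the diagram is a thin isoceles triangle). So the diagonal entries of $\bm{\bar{\widehat\theta}}_j^{\top}\bm{\bar\theta}_j$ are at least $1-2\delta^2$. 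Next, for an off-diagonal entry $(t,s)$ with $t\neq s$, I write $\bm{\bar{\widehat\theta}}_{jt}=\cos\beta_t\,\bm{\bar\theta}_{jt}+\sin\beta_t\,\bm{u}_t$ with $\bm{u}_t\perp\bm{\bar\theta}_{jt}$ and $\cos\beta_t\geq 1-2\delta^2$, so $\langle\bm{\bar{\widehat\theta}}_{jt},\bm{\bar\theta}_{js}\rangle \leq |\langle\bm{\bar\theta}_{jt},\bm{\bar\theta}_{js}\rangle| + \sin\beta_t \leq \rho + \sqrt{1-(1-2\delta^2)^2}$, where $\rho := \max_{u<v}(\bm{\bar\theta}_j^{\top}\bm{\bar\theta}_j)_{uv}$ is the largest ground-truth column coherence. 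The correct column wins in column $s$ as soon as $1-2\delta^2 > \rho + \sqrt{1-(1-2\delta^2)^2}$; solving this inequality for $\delta$ (set $c=1-2\delta^2$, so $c > \rho+\sqrt{1-c^2}$, i.e. $(c-\rho)^2 > 1-c^2$ when $c>\rho$, i.e. $2c^2-2\rho c + \rho^2-1>0$, giving $c > \tfrac{\rho+\sqrt{2-\rho^2}}{2}$) and unwinding $\delta = \sqrt{(1-c)/2}$ reproduces exactly the bound $1-\sqrt{\tfrac12+\sqrt{\tfrac18(1+\rho)}}$ stated in the theorem, after matching $\tfrac12+\sqrt{\tfrac18(1+\rho)}$ with $(1-c)/2$ — i.e. $c = 1 - 2\sqrt{\tfrac18(1+\rho)} - 1 \cdot \ldots$; I will need to verify this algebraic identity carefully.

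The main obstacle I anticipate is exactly that last algebraic reconciliation: getting the constants to line up so that the clean angular argument produces the precise closed form in the statement, including confirming that the perturbed columns are automatically distinct enough that $\psi^u$ has no duplicate entries (this follows once every column's $\arg\max$ is its own diagonal index, since the diagonal indices are distinct). A secondary subtlety is the normalization step: the bound is on $\|\bm{\theta}_{jh}-\widehat{\bm{\theta}}_{jh}\|_2/\|\bm{\theta}_{jh}\|_2$, not on the angle directly, so I must be careful that dividing by the (possibly different) norm of $\widehat{\bm{\theta}}_{jh}$ does not cost more than the factor already budgeted — one checks $\|\bm{\bar\theta}_{jh}-\bm{\bar{\widehat\theta}}_{jh}\|_2 \leq 2\|\bm{\theta}_{jh}-\widehat{\bm{\theta}}_{jh}\|_2/\|\bm{\theta}_{jh}\|_2$ via the triangle inequality $\|\bm{a}/\|\bm{a}\| - \bm{b}/\|\bm{b}\|\| \le 2\|\bm{a}-\bm{b}\|/\|\bm{a}\|$. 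Everything else is routine inner-product estimation.
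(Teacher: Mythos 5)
Your overall strategy --- show that the cross-Gram matrix of normalized columns is ``diagonally dominant'' columnwise, by lower-bounding the diagonal inner products and upper-bounding the off-diagonal ones via the coherence $\rho=\max_{u<v}(\bm{\bar\theta}_j^\top\bm{\bar\theta}_j)_{uv}$ --- is the same idea as the paper's, which phrases it in terms of angles: it sets $\cos\alpha_{\min}=\rho$, requires each factorization's angular perturbation to satisfy $\alpha<\alpha_{\min}/4$, and converts the relative-error hypothesis into the angular one via $\langle\bm{\bar\theta}_{jh},\widehat{\bm{\theta}}_{jh}/\|\widehat{\bm{\theta}}_{jh}\|\rangle\ge 1-\|\bm{\epsilon}\|/\|\bm{\theta}_{jh}\|$. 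However, there are two genuine gaps in your version. First, you set the problem up as matching \emph{one} perturbed matrix $\widehat{\bm{\theta}}_j$ against the \emph{ground truth} $\bm{\bar\theta}_j$, but \eqref{eq:sam} matches two factorized matrices $\bm{\bar\theta}_j$ and $\bm{\bar\theta}_j'$, \emph{both} of which are perturbations of the truth. That second perturbation is exactly why the paper needs $\alpha<\tfrac14\alpha_{\min}$ rather than $\tfrac12\alpha_{\min}$ (the correct pair can be $2\alpha$ apart, a wrong pair can be as close as $\alpha_{\min}-2\alpha$), and it is the source of the nested radicals in the stated bound, since $\cos(\alpha_{\min}/4)=\sqrt{\tfrac12+\sqrt{\tfrac18(1+\cos\alpha_{\min})}}$ by two applications of the half-angle formula. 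Your single-perturbation analysis cannot produce this constant and, if transplanted verbatim to the two-factorization setting, would assert correctness under perturbations roughly twice too large in angle.

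Second, the algebraic reconciliation you flag as needing verification in fact fails: with your intermediate bounds ($c=1-2\delta^2$ on the diagonal, $\rho+\sqrt{1-c^2}$ off the diagonal) the condition $c>\tfrac{\rho+\sqrt{2-\rho^2}}{2}$ at $\rho=0$ permits $\delta\lesssim 0.38$, whereas the theorem's bound at $\rho=0$ is $1-\cos(\pi/8)\approx 0.076$. The mismatch is not slack to be absorbed --- it is a structurally different formula, because the theorem's right-hand side is $1-\cos(\alpha_{\min}/4)$ and yours is not of that form. To repair the argument you should (i) bound the angle between each perturbed column and its ground-truth counterpart by the relative error (the paper uses the law-of-cosines estimate giving $\cos\beta\ge 1-\|\bm{\epsilon}\|/\|\bm{\theta}_{jh}\|$), (ii) apply the triangle inequality for angles across \emph{both} factorizations to get the $\alpha<\alpha_{\min}/4$ requirement, and (iii) translate back, which yields exactly the stated constant. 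Your observation that columnwise diagonal dominance automatically rules out duplicate entries in $\psi^u$ is correct and matches the paper.
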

\begin{proof}
Consider the smallest pair-wise angle $\alpha_{\text{min}}$ between the columns of $\bm{\bar{\theta}}_j$, we have that
\begin{align*}
\cos \alpha_{\text{min}} = \max_{u<v} \left(\bm{\bar{\theta}}_j^\top \bm{\bar{\theta}}_j\right)_{uv}.
\end{align*}
Denote by $\alpha$ the maximum angle between the column of a factorized parameter matrix $\widehat{\bm{\theta}}_j$ and the corresponding column of the ground-truth. It is sufficient to ensure that
\begin{align}
\alpha < \frac{1}{4} \alpha_{\text{min}}. \label{eq:quarter-angle}
\end{align}
Consider any two columns $s\neq t$ of the ground-truth parameter matrix, and the corresponding perturbed columns $\left\{\widehat{\bm{\theta}}_{js}, \widehat{\bm{\theta}}_{jt}\right\}$ and $\left\{\widehat{\bm{\theta}}_{js}^\prime, \widehat{\bm{\theta}}_{jt}^\prime\right\}$ from two factorizations. We have that
\begin{align*}
\angle\left(\widehat{\bm{\theta}}_{js}, \widehat{\bm{\theta}}_{js}^\prime\right)
&\leq 2\alpha\\
\angle\left(\widehat{\bm{\theta}}_{js}, \widehat{\bm{\theta}}_{jt}^\prime\right)
&\geq \angle\left(\bm{\bar{\theta}}_{js}, \bm{\bar{\theta}}_{jt}\right)
- 2\alpha\\
&\geq \alpha_{\text{min}} - 2\alpha.
\end{align*}
From \eqref{eq:quarter-angle}, we have that 
\begin{align*}
\angle\left(\widehat{\bm{\theta}}_{js}, \widehat{\bm{\theta}}_{js}^\prime\right)
<
\angle\left(\widehat{\bm{\theta}}_{js}, \widehat{\bm{\theta}}_{jt}^\prime\right),
\end{align*}
as desired for \eqref{eq:sam} to work correctly. Now consider the inner product of a perturbed column and the ground-truth, it holds that
\begin{align*}
\left<\frac{\bm{\theta}_{jh}}{\left\|\bm{\theta}_{jh}\right\|},
\frac{\bm{\theta}_{jh} + \bm{\epsilon}}{\left\|\bm{\theta}_{jh} + \bm{\epsilon}\right\|}\right> &=
\frac{\left\|\bm{\theta}_{jh}\right\|^2 - \left\|\bm{\epsilon}\right\|^2
+ \left\|\bm{\theta}_{jh} + \bm{\epsilon}\right\|^2}{2\left\|\bm{\theta}_{jh}\right\|\left\|\bm{\theta}_{jh}+\bm{\epsilon}\right\|}\\
&\geq
\frac{\left\|\bm{\theta}_{jh}\right\|- \left\|\bm{\epsilon}\right\|}{2\left\|\bm{\theta}_{jh}\right\|} +
\frac{\left\|\bm{\theta}_{jh}+\bm{\epsilon}\right\|}{2\left\|\bm{\theta}_{jh}\right\|}\\
&\geq 1 - \frac{\left\|\epsilon\right\|}{\left\|\bm{\theta}_{jh}\right\|}.
\end{align*}
Thus, a sufficient condition for \eqref{eq:sam} to yield the consistent permutation is
\begin{align*}
1 - \frac{\left\|\epsilon\right\|}{\left\|\bm{\theta}_{jh}\right\|} > \cos\left(\frac{1}{4}\alpha_{\text{min}}\right),
\end{align*}
which written in analytic form proves the theorem.
\end{proof}

\cref{thm:pert-sam} states that one obtains a consistent permutation by solving \eqref{eq:sam} in the columns of the ground-truth parameter matrix
are distinct from each other in angles and the factorized parameter matrix is near the ground-truth in Frobenius norm. Thus, a good anchor variable for the partitioned factorization \eqref{eq:part-decomp} is one whose parameter matrix has distant columns in angles.

The bound in \cref{thm:pert-sam} can be made sharp for certain $\bm{\theta}_j$, and thus the smallest angle matching algorithm has general guarantees only when the perturbation is small, i.e., the relative error ratio is less than $1- \sqrt{2+\sqrt{2}}/2 \approx 1/13$.

\begin{theorem}
\label{thm:pert-procrustes}
Suppose that $\bm{\theta}$ and $\bm{\theta}^\prime$ are two factorized parameter matrices for a variable. Solving \eqref{eq:opm} results in a consistent permutation $\psi$, if
\begin{align*}
\left\|\bm{E}\right\|_2 < \sigma_k\left(\bm{\theta}^\top \bm{\theta}\right)
\quad \text{and} \quad
-\frac{\left\|\bm{E}\right\|_2}
{\rho} \log \left(
1 - \frac{\rho}{\nu} \right)
< \frac{2-\sqrt{2}}{4}
\end{align*}
with
\begin{align*}
\rho = \sigma_1\left(\bm{E}\right) + \sigma_2\left(\bm{E}\right), \quad
\nu = \sigma_k\left(\bm{\theta}^\top \bm{\theta}\right)
+ \sigma_{k-1}\left(\bm{\theta}^\top \bm{\theta}\right)
\end{align*}
where the error matrix is define as $\bm{E} = \left(\psi\bm{\theta}\right)^\top \left( \bm{\theta}^\prime -\psi\bm{\theta}\right)$, and $\sigma_j\left(\cdot\right)$ denotes the $j$-th largest singular value.
\end{theorem}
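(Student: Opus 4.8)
The plan is to show that the orthogonal Procrustes solution $\bm{\Psi}^*$ of \eqref{eq:procrustes-sol} is so close to the permutation matrix encoding the consistent permutation that the column-wise $\arg\max$ in \eqref{eq:opm} reads it off exactly. Conjugating by the consistent permutation (equivalently, relabelling the columns of $\bm{\theta}$) reduces the problem to the case where the target permutation matrix is $\bm{I}$ and the matrix whose orthogonal polar factor defines $\bm{\Psi}^*$ is $\bm{\theta}^\top\bm{\theta} + \bm{E}$, with $\bm{E}$ as in the statement; since this reduction only conjugates by orthogonal matrices it preserves all singular values, so the hypotheses are unchanged. The base case $\bm{E} = \bm{0}$ is immediate: the matrix is then the symmetric positive definite $\bm{\theta}^\top\bm{\theta}$, whose orthogonal polar factor is $\bm{I}$, so \eqref{eq:opm} trivially returns the identity. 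Everything comes down to a perturbation estimate showing this robustness survives the addition of $\bm{E}$.

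To obtain the estimate I would introduce the homotopy $\bm{M}(t) = \bm{\theta}^\top\bm{\theta} + t\bm{E}$, $t \in [0,1]$, and track the orthogonal factor $\bm{R}(t)$ in the polar decomposition $\bm{M}(t) = \bm{R}(t)\bm{H}(t)$ with $\bm{H}(t) \succ 0$. The hypothesis $\left\|\bm{E}\right\|_2 < \sigma_k\!\left(\bm{\theta}^\top\bm{\theta}\right)$ together with Weyl's inequality gives $\sigma_k\!\left(\bm{M}(t)\right) \geq \sigma_k\!\left(\bm{\theta}^\top\bm{\theta}\right) - t\left\|\bm{E}\right\|_2 > 0$ for all $t$, so $\bm{M}(t)$ stays nonsingular, $t \mapsto \bm{R}(t)$ is continuously differentiable, and moreover $\rho = \sigma_1(\bm{E}) + \sigma_2(\bm{E}) \leq 2\left\|\bm{E}\right\|_2 < 2\sigma_k(\bm{\theta}^\top\bm{\theta}) \leq \nu$, so the logarithm in the statement is well defined. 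Differentiating the polar decomposition yields $\dot{\bm{R}} = \bm{R}\bm{X}$ with $\bm{X}$ skew-symmetric solving the Sylvester equation $\bm{X}\bm{H} + \bm{H}\bm{X} = \bm{R}^\top\bm{E} - \bm{E}^\top\bm{R}$; because $\bm{X}$ is skew its diagonal vanishes, so in the eigenbasis of $\bm{H}(t)$ the smallest denominator that occurs is $\sigma_{k-1}(\bm{H}(t)) + \sigma_k(\bm{H}(t)) = \sigma_{k-1}(\bm{M}(t)) + \sigma_k(\bm{M}(t))$, giving $\left\|\dot{\bm{R}}(t)\right\|_2 \leq 2\left\|\bm{E}\right\|_2 \big/ \left(\sigma_{k-1}(\bm{M}(t)) + \sigma_k(\bm{M}(t))\right)$.

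I would then lower-bound the denominator uniformly in $t$ by the Ky Fan / von Neumann inequality for sums of the two smallest singular values, $\sigma_{k-1}(\bm{M}(t)) + \sigma_k(\bm{M}(t)) \geq \left(\sigma_{k-1}(\bm{\theta}^\top\bm{\theta}) + \sigma_k(\bm{\theta}^\top\bm{\theta})\right) - \left(\sigma_1(t\bm{E}) + \sigma_2(t\bm{E})\right) = \nu - t\rho$, which is exactly where $\rho$ and $\nu$ enter, and integrate:
\[
\left\|\bm{\Psi}^* - \bm{I}\right\|_2 = \left\|\bm{R}(1) - \bm{R}(0)\right\|_2 \leq \int_0^1 \frac{2\left\|\bm{E}\right\|_2}{\nu - t\rho}\, dt = -\frac{2\left\|\bm{E}\right\|_2}{\rho}\log\!\left(1 - \frac{\rho}{\nu}\right).
\]
To convert this into correctness of \eqref{eq:opm}, note that $\bm{\Psi}^*$ is orthogonal, so its columns are unit vectors; hence if in column $s$ the diagonal entry exceeds $1/\sqrt{2}$ it is automatically the strict column maximum, and since $\left(\bm{\Psi}^*\right)_{ss} \geq 1 - \left\|\bm{\Psi}^* - \bm{I}\right\|_2$ it suffices that $\left\|\bm{\Psi}^* - \bm{I}\right\|_2 < 1 - 1/\sqrt{2} = (2-\sqrt{2})/2$ — i.e. that $-(\left\|\bm{E}\right\|_2/\rho)\log(1 - \rho/\nu) < (2-\sqrt{2})/4$, which is precisely the second hypothesis (the factor $2$ in the differential bound is exactly what turns the $1/\sqrt 2$ threshold into the stated $(2-\sqrt2)/4$). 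Applying the same inequality columnwise shows the selected rows are distinct — so the output is a genuine permutation — and equal to the diagonal, hence \eqref{eq:opm} returns $\bm{I}$; undoing the conjugation, it returns the consistent permutation.

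The main obstacle I anticipate is the polar-factor perturbation step: securing the sharp $\sigma_{k-1} + \sigma_k$ in the denominator, rather than the naive $2\sigma_k$, hinges on exploiting the skew-symmetry of the Sylvester right-hand side (so that the $2\sigma_k$ diagonal mode is irrelevant) together with the refined inequality for sums of consecutive singular values, and one must check that the homotopy never degenerates so that the logarithmic integral is legitimate — the condition $\left\|\bm{E}\right\|_2 < \sigma_k(\bm{\theta}^\top\bm{\theta})$ is there for exactly this. By comparison the reduction to an identity target, the noise-free base case, and the unit-norm-column rounding argument are essentially bookkeeping.
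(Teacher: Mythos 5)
Your proposal is correct and follows the same two-part architecture as the paper's proof: (i) reduce to the case where the consistent permutation is the identity and observe that it suffices for every diagonal entry of the orthogonal Procrustes factor $\bm{\Psi}^*$ to exceed $\sqrt{2}/2$, since the columns of an orthogonal matrix have unit norm (this is the content of \cref{lem:procrustes-dom}, which you re-derive more transparently by conjugating first rather than via the row/column argmax duality); and (ii) bound $\left\|\bm{\Psi}^* - \bm{I}\right\|_2$ by a perturbation estimate on the unitary polar factor of $\bm{\theta}^\top\bm{\theta} + \bm{E}$, with threshold $1-\sqrt{2}/2$, yielding exactly the stated condition. The genuine difference is that where the paper imports the perturbation estimate as a black box (\cref{lem:mathias}, due to Mathias), you re-derive it via the homotopy $\bm{M}(t)=\bm{\theta}^\top\bm{\theta}+t\bm{E}$, the skew Sylvester equation $\bm{X}\bm{H}+\bm{H}\bm{X}=\bm{R}^\top\bm{E}-\bm{E}^\top\bm{R}$, and integration of $\|\dot{\bm{R}}(t)\|_2 \le 2\|\bm{E}\|_2/(\nu - t\rho)$ --- which is essentially the standard proof of that lemma, and explains where $\rho$ and $\nu$ come from rather than leaving them as artifacts of a citation. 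Two steps in your sketch are asserted rather than proven: the spectral-norm bound $\|\bm{X}\|_2 \le \|\bm{S}\|_2/(\sigma_{k-1}+\sigma_k)$ for the skew Sylvester solution (immediate for the Frobenius norm after diagonalizing $\bm{H}$, but requiring Mathias-style care for the spectral norm), and the Lidskii--Thompson inequality $\sigma_{k-1}(\bm{M})+\sigma_k(\bm{M}) \ge \nu - (\sigma_1(\bm{E})+\sigma_2(\bm{E}))$ for sums of the two smallest singular values (the naive Weyl bound would only give $\nu - 2\sigma_1(\bm{E})$). Both are true and both are precisely what the cited lemma packages, so your argument buys a self-contained derivation at the cost of having to invoke these two sharper matrix inequalities explicitly.
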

The proof of Theorem~\ref{thm:pert-procrustes} follows from the following two Lemmas.
\begin{lemma}
\label{lem:procrustes-dom}
Suppose that $\psi$ is the consistent permutation of $\bm{\theta}$ with respect to $\bm{\theta}^\prime$. Formula \eqref{eq:opm} is guaranteed to recover $\psi$, if
\begin{align}
\text{diag}\left(\bm{U}\bm{V}^\top\right) \succ \frac{\sqrt{2}}{2} \bm{1}, \label{eq:procrustes-dom}
\end{align}
where $\bm{U}$ and $\bm{V}$ are the left and right singular matrices of $\left(\psi \bm{\theta}\right)^\top \bm{\theta}^\prime$.
\end{lemma}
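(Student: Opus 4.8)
The plan is to prove \cref{lem:procrustes-dom} by an elementary argument about the columns of the (sub)orthogonal matrix $\bm{\Psi}^* = \bm{U}\bm{V}^\top$. After the standard relabeling that absorbs $\psi$ — replace $\bm{\theta}$ by $\psi\bm{\theta}$, so the consistent permutation becomes the identity and the matrix whose polar factor is used in \eqref{eq:opm} is $(\psi\bm{\theta})^\top\bm{\theta}^\prime$ — the claim reduces to showing that, for every $s\in[k]$, $\arg\max_t \bm{\Psi}^*_{ts}=s$.

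First I would record that every column of $\bm{\Psi}^*=\bm{U}\bm{V}^\top$ has Euclidean norm at most $1$: if $\bm{U},\bm{V}$ have orthonormal columns then the $s$-th column of $\bm{U}\bm{V}^\top$ equals $\bm{U}\bm{V}_{s:}^\top$, whose norm is at most $\|\bm{U}\|_2\,\|\bm{V}_{s:}\|_2\le 1$ (with equality when $(\psi\bm{\theta})^\top\bm{\theta}^\prime$ is nonsingular, which holds under the first hypothesis of \cref{thm:pert-procrustes} but is not needed here). Hence $\sum_t(\bm{\Psi}^*_{ts})^2\le 1$ for each $s$.

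The second and final step is a pigeonhole bound. Assume \eqref{eq:procrustes-dom}, i.e. $\bm{\Psi}^*_{ss}>\tfrac{\sqrt 2}{2}$ for all $s$. Then $(\bm{\Psi}^*_{ss})^2>\tfrac12$, so $\sum_{t\neq s}(\bm{\Psi}^*_{ts})^2<\tfrac12$, which forces $|\bm{\Psi}^*_{ts}|<\tfrac{1}{\sqrt 2}<\bm{\Psi}^*_{ss}$ for every $t\neq s$; since $\bm{\Psi}^*_{ts}\le|\bm{\Psi}^*_{ts}|$, the diagonal entry is the strict maximizer of the $s$-th column. Thus \eqref{eq:opm} returns $s$ for index $s$, i.e. the identity in the relabeled coordinates, which is exactly $\psi$ in the original ones; strictness of each column maximizer also shows the output has no repeated entries, so it is a genuine permutation. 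If one works instead with the transpose convention of \eqref{eq:procrustes-sol}, the identical argument applies to the rows of $\bm{U}\bm{V}^\top$, which are unit vectors because $\bm{\Psi}^*$ is orthogonal and $\operatorname{diag}(\bm{U}\bm{V}^\top)=\operatorname{diag}\bigl((\bm{U}\bm{V}^\top)^\top\bigr)$.

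I do not expect a real obstacle in this lemma. The two points that need minor care are: (i) the maximization in \eqref{eq:opm} is over signed entries, which is fine because the dominant diagonal entry is positive; and (ii) the transpose bookkeeping between the definition of $\bm{U},\bm{V}$ via $(\psi\bm{\theta})^\top\bm{\theta}^\prime$ and the use of $\bm{\Psi}^*$ in \eqref{eq:opm}, which is innocuous since only $\operatorname{diag}(\bm{U}\bm{V}^\top)$ enters the hypothesis. The genuine work — converting the qualitative condition \eqref{eq:procrustes-dom} into the explicit perturbation bound on $\bm{E}$ stated in \cref{thm:pert-procrustes} — is what the companion lemma must supply, via a polar-factor / Gram-matrix perturbation estimate.
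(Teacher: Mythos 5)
Your proof is correct and rests on the same core fact as the paper's: the rows and columns of $\bm{U}\bm{V}^\top$ have Euclidean norm at most one, so a diagonal entry exceeding $\sqrt{2}/2$ forces every off-diagonal entry in its column to be smaller in absolute value, making the diagonal the strict column maximizer. Your direct pigeonhole after relabeling is a cleaner packaging of the paper's argument (which routes the same observation through a row/column argmax duality and an explicit relation between the two Procrustes solutions), but it is essentially the same proof.
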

\begin{proof}
We need to show that \eqref{eq:opm} yields $\psi$ for the orthogonal Procrustes problem $\min\limits_{\bm{\Psi}^\top \bm{\Psi} = \bm{I}} \left\|\bm{\theta} \bm{\Psi} - \bm{\theta}^\prime\right\|_F$. From the solution \eqref{eq:procrustes-sol}, it is easy to show that the minimizer $\bm{\Psi}^*$ of $\min\limits_{\bm{\Psi}^\top \bm{\Psi} = \bm{I}} \left\| \bm{\theta}^\prime \bm{\Psi} - \bm{\theta}\right\|_F$ and the minimizer $\bm{\Psi}^\prime$ of $\min\limits_{\bm{\Psi}^\top \bm{\Psi} = \bm{I}} \left\|\left(\psi \bm{\theta}\right) \bm{\Psi} - \bm{\theta}^\prime\right\|_F$ satisfy
\begin{align}
\bm{\Psi}^{\prime\top} = \psi \bm{\Psi}^*. \label{eq:procrustes-perm}
\end{align}
Note that $\bm{\Psi}^{*\top}$ is the desired minimizer of  $\min\limits_{\bm{\Psi}^\top \bm{\Psi} = \bm{I}} \left\| \bm{\theta} \bm{\Psi} - \bm{\theta}^\prime\right\|_F$, and thus it remains to show that \eqref{eq:opm} gives $\psi$ when applied to $\bm{\Psi}^{*\top}$, or equivalently 
\begin{align}
\arg\max_t \bm{\Psi}_{st}^* = \psi_s. \label{eq:procrustes-goal}
\end{align}

Since the row and column vectors of $\bm{\Psi}^*$ have unit Euclidean norm, the following dual statements imply each other
\begin{align}
\arg \max_t \bm{\Psi}_{ts}^\prime = j \quad \Leftrightarrow \quad \arg \max_t \bm{\Psi}_{jt}^\prime = s \qquad \forall j,s \in \left[k\right], \label{eq:procrustes-dual}
\end{align}
if condition $\eqref{eq:procrustes-dom}$ holds. Under this condition, we also have that \eqref{eq:opm} gives the identity permutation $\left[1,2,\cdots,k\right]$ for the orthogonal Procrustes problem $\min\limits_{\bm{\Psi}^\top \bm{\Psi} = \bm{I}} \left\|\left(\psi \bm{\theta}\right) \bm{\Psi} - \bm{\theta}^\prime\right\|_F$. Thus, applying \eqref{eq:opm} to both sides of \eqref{eq:procrustes-perm} yields
\begin{align*}
\arg\max_t \bm{\Psi}_{t\psi_s}^* = s,
\end{align*}
which implies \eqref{eq:procrustes-goal} from \eqref{eq:procrustes-dual}.
\end{proof}

\begin{lemma} (Mathias)
\label{lem:mathias}
Suppose that $\bm{A} \in \mathbb{R}^{n \times n}$ is nonsingular. Then for any $\bm{E} \in \mathbb{R}^{n\times n}$ with $\sigma_1\left(\bm{E}\right) < \sigma_n\left(\bm{A}\right)$ and any unitarily invariant norm $\left\|\cdot\right\|$, it holds that
\begin{align*}
\left\|\mu\left(\bm{A} + \bm{E}\right) - \mu\left(\bm{A}\right)\right\| \leq
-\frac{2\left\|\bm{E}\right\|}{\vertiii{\bm{E}}_2}
\log\left(
1 - \frac{\vertiii{\bm{E}}_2}{\sigma_n\left(\bm{A}\right) +
\sigma_{n-1}\left(\bm{A}\right)}
\right),
\end{align*}
where $\mu\left(\cdot\right)$ represents the unitary factor of the polar decomposition, and $\vertiii{\cdot}_k$ is the Ky Fan $k$-norm.
\end{lemma}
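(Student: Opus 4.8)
The plan is to prove the bound by interpolating along the segment from $\bm{A}$ to $\bm{A}+\bm{E}$ and integrating an infinitesimal perturbation estimate for the polar factor. Set $\bm{A}_s = \bm{A} + s\bm{E}$ for $s\in[0,1]$. Since $\sigma_1(\bm{E}) < \sigma_n(\bm{A})$, Weyl's inequality gives $\sigma_n(\bm{A}_s) \ge \sigma_n(\bm{A}) - s\,\sigma_1(\bm{E}) > 0$, so each $\bm{A}_s$ is nonsingular and has a unique polar decomposition $\bm{A}_s = \bm{U}_s\bm{H}_s$ with $\bm{U}_s$ orthogonal and $\bm{H}_s = (\bm{A}_s^\top\bm{A}_s)^{1/2}\succ 0$, and $s\mapsto\bm{U}_s$ is smooth. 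Then $\mu(\bm{A}+\bm{E}) - \mu(\bm{A}) = \bm{U}_1 - \bm{U}_0 = \int_0^1\dot{\bm{U}}_s\,ds$, and by the triangle inequality for the unitarily invariant norm, $\|\mu(\bm{A}+\bm{E}) - \mu(\bm{A})\| \le \int_0^1\|\dot{\bm{U}}_s\|\,ds$.

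The crux is the pointwise bound $\|\dot{\bm{U}}_s\| \le 2\|\bm{E}\|/(\sigma_{n-1}(\bm{A}_s) + \sigma_n(\bm{A}_s))$. Differentiating $\bm{A}_s = \bm{U}_s\bm{H}_s$ and using $\bm{U}_s^\top\bm{U}_s = \bm{I}$, write $\dot{\bm{U}}_s = \bm{U}_s\bm{K}_s$ with $\bm{K}_s$ skew-symmetric; then $\bm{U}_s^\top\dot{\bm{A}}_s = \bm{K}_s\bm{H}_s + \dot{\bm{H}}_s$, and subtracting the transpose eliminates the symmetric term $\dot{\bm{H}}_s$, leaving the Sylvester equation $\bm{K}_s\bm{H}_s + \bm{H}_s\bm{K}_s = \bm{U}_s^\top\dot{\bm{A}}_s - \dot{\bm{A}}_s^\top\bm{U}_s =: \bm{R}_s$, with $\bm{R}_s$ skew-symmetric and $\|\bm{R}_s\| \le 2\|\dot{\bm{A}}_s\| = 2\|\bm{E}\|$. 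Using $\bm{K}_s = \int_0^\infty e^{-t\bm{H}_s}\bm{R}_s\,e^{-t\bm{H}_s}\,dt$ (which solves the Sylvester equation since $\frac{d}{dt}e^{-t\bm{H}_s} = -\bm{H}_s e^{-t\bm{H}_s}$) together with the fact that $\bm{R}_s$, being skew-symmetric, has \emph{zero diagonal} in the eigenbasis of $\bm{H}_s$, one bounds $\|\bm{K}_s\| \le \|\bm{R}_s\|/(\sigma_{n-1}(\bm{H}_s) + \sigma_n(\bm{H}_s))$: the surviving entries of $e^{-t\bm{H}_s}\bm{R}_s e^{-t\bm{H}_s}$ carry a factor $e^{-t(h_i+h_j)}$ with $i\ne j$, hence decay at rate at least $\sigma_{n-1}(\bm{H}_s)+\sigma_n(\bm{H}_s)$. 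Since $\sigma_i(\bm{H}_s) = \sigma_i(\bm{A}_s)$, this gives the claimed estimate. This infinitesimal bound is essentially Mathias's Fréchet-derivative bound for the polar factor, and I expect it to be the main obstacle: upgrading the entrywise decay rates to a genuine unitarily invariant norm bound with denominator $\sigma_{n-1}+\sigma_n$ (rather than the cruder $2\sigma_n$) requires carefully exploiting the vanishing diagonal via a Schur-product / positive-definiteness argument.

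Finally, combine the two ingredients. By the Ky Fan inequality for the sum of the two smallest singular values, $\sigma_{n-1}(\bm{A}_s) + \sigma_n(\bm{A}_s) \ge \sigma_{n-1}(\bm{A}) + \sigma_n(\bm{A}) - s(\sigma_1(\bm{E}) + \sigma_2(\bm{E})) = \nu - s\,\vertiii{\bm{E}}_2$, where $\nu := \sigma_{n-1}(\bm{A}) + \sigma_n(\bm{A})$; the hypothesis $\sigma_1(\bm{E}) < \sigma_n(\bm{A})$ forces $\vertiii{\bm{E}}_2 \le 2\sigma_1(\bm{E}) < 2\sigma_n(\bm{A}) \le \nu$, so $\nu - s\,\vertiii{\bm{E}}_2 > 0$ on $[0,1]$. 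Hence $\|\mu(\bm{A}+\bm{E}) - \mu(\bm{A})\| \le \int_0^1 \frac{2\|\bm{E}\|}{\nu - s\,\vertiii{\bm{E}}_2}\,ds = \frac{2\|\bm{E}\|}{\vertiii{\bm{E}}_2}\log\frac{\nu}{\nu - \vertiii{\bm{E}}_2} = -\frac{2\|\bm{E}\|}{\vertiii{\bm{E}}_2}\log\!\big(1 - \tfrac{\vertiii{\bm{E}}_2}{\nu}\big)$, which is exactly the stated inequality. The interpolation, the Ky Fan inequality, and the elementary integral are routine; only the pointwise polar-factor bound is delicate.
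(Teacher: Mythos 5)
The paper does not actually prove this lemma: it is imported verbatim from R.~Mathias's work on perturbation of the polar decomposition, so there is no in-paper argument to compare against. Your outline is, in substance, the standard proof of that cited theorem---interpolate along $\bm{A}_s=\bm{A}+s\bm{E}$, bound $\|\dot{\bm{U}}_s\|$ pointwise, control $\sigma_{n-1}(\bm{A}_s)+\sigma_n(\bm{A}_s)$ from below, and integrate---and the outer layers are all correct. The reduction to the Sylvester equation $\bm{K}_s\bm{H}_s+\bm{H}_s\bm{K}_s=\bm{R}_s$ with $\bm{R}_s$ skew-symmetric and $\left\|\bm{R}_s\right\|\le 2\left\|\bm{E}\right\|$ is right; the lower bound $\sigma_{n-1}(\bm{A}_s)+\sigma_n(\bm{A}_s)\ge\nu-s\vertiii{\bm{E}}_2$ is correct, although the tool you need is the Lidskii--Mirsky majorization $\sum_{i\in S}\left|\sigma_i(\bm{A}+\bm{B})-\sigma_i(\bm{A})\right|\le\sum_{i\le|S|}\sigma_i(\bm{B})$ applied to $S=\{n-1,n\}$, not a Ky Fan norm inequality; and the positivity check $\vertiii{\bm{E}}_2\le 2\sigma_1(\bm{E})<2\sigma_n(\bm{A})\le\nu$ together with the closing integral is fine.

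The one genuine gap is exactly the step you flag: $\left\|\bm{K}_s\right\|\le\left\|\bm{R}_s\right\|/\left(\sigma_{n-1}(\bm{H}_s)+\sigma_n(\bm{H}_s)\right)$ for an \emph{arbitrary} unitarily invariant norm. The heuristic you offer---the surviving entries of $e^{-t\bm{H}_s}\bm{R}_s e^{-t\bm{H}_s}$ carry a factor $e^{-t(h_i+h_j)}$ with $i\ne j$---controls individual entries, which says nothing directly about a unitarily invariant norm. What the integral representation gives you without further work is $\bm{K}_s=\bm{R}_s\circ\bm{C}$ (Hadamard product) with $\bm{C}=\left[1/(h_i+h_j)\right]$ a positive semidefinite Cauchy (Gram) matrix, and the standard Schur-multiplier bound $\left\|\bm{R}\circ\bm{C}\right\|\le\left(\max_i C_{ii}\right)\left\|\bm{R}\right\|$ only yields the cruder denominator $2h_n=2\sigma_n(\bm{H}_s)$, hence a weaker logarithm than the one stated. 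Upgrading $2\sigma_n$ to $\sigma_{n-1}+\sigma_n$ by exploiting the vanishing diagonal of $\bm{R}_s$---that is, showing the off-diagonal part of $(h_{n-1}+h_n)\bm{C}$ extends to a Schur multiplier of norm at most one on zero-diagonal matrices---is precisely the content of Mathias's key lemma and is not routine; as written you assert it rather than establish it. Everything else is sound, so the proposal is a correct skeleton with this single load-bearing step left open.
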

\begin{proof}[Proof of \cref{thm:pert-procrustes}]
Let $\bm{H} = \left(\psi\bm{\theta}\right)^\top \psi\bm{\theta}$, which has the same singular values as $\bm{\theta}^\top \bm{\theta}$. Denote by $\mu\left(\cdot\right)$ the unitary factor of the polar decomposition. Using the fact that $\mu\left(\bm{H}\right) = \bm{I}$, the sufficient condition of \cref{lem:procrustes-dom} is restated as
\begin{align*}
\text{diag}\left(\mu\left(\bm{H}\right) - \mu\left(\bm{H} + \bm{E}\right)\right) \prec \left(1 - \frac{\sqrt{2}}{2}\right)\bm{1}.
\end{align*}
Also note that
\begin{align*}
\max_t\left|\text{diag}\left(\mu\left(\bm{H}\right) - \mu\left(\bm{H} + \bm{E}\right)\right)_t\right| \leq
\left\|\mu\left(\bm{H}\right) - \mu\left(\bm{H} + \bm{E}\right)\right\|_2.
\end{align*}
Thus, it suffices to enforce the right term to be less than $1-\sqrt{2}/2$. From \cref{lem:mathias}, this can be achieved by letting
\begin{align*}
-\frac{2\left\|\bm{E}\right\|_2}{\vertiii{\bm{E}}_2}
\log\left(
1 - \frac{\vertiii{\bm{E}}_2}{\sigma_n\left(\bm{A}\right) +
\sigma_{n-1}\left(\bm{A}\right)}\right)
\leq
1 - \frac{\sqrt{2}}{2}.
\end{align*}
\end{proof}

The first condition in \cref{thm:pert-procrustes} requires that at least one of $\bm{\theta}$ and $\bm{\theta}^\prime$ must have full column rank. We may exchange $\bm{\theta}$ and $\bm{\theta}^\prime$ in \cref{thm:pert-procrustes} to first obtain the consistent permutation of $\bm{\theta}^\prime$ with respect to $\bm{\theta}$, $\psi$ then follows immediately.

\cref{thm:pert-procrustes} states that solving \eqref{eq:opm} recovers a consistent permutation whenever the error spectral norm is small as compared to the smallest singular value of $\bm{\theta}^\top\bm{\theta}$. This is especially useful for $\bm{\theta} \in \mathbb{R}^{d \times k}$ with the number of rows $d$ much larger than the number of columns $k$. In particular, for $\bm{\theta}$ with independent and identically distributed subgaussian entries, $\sigma_k\left(\bm{\theta}^\top \bm{\theta}\right)$ is at least of the order $\left(\sqrt{d} - \sqrt{k-1}\right)^2$ \cite{Rudelson09}.

\subsection{Convergence}
\label{sec:convergence}
The following theorem states a sufficient condition for PQP to achieve linear convergence rate. The theorem statement and proof is an adaptation of results stated in \cite{Brand11}---the proof in \cite{Brand11} overlooks a required condition on $\bm{\phi}$ and the condition
 $\bm{\gamma} \succeq \text{diag}\left(Q_{jj}\right)$ in the original proof is unnecessary.

\begin{theorem}
\label{thm:convergence}
The PQP algorithm given by \eqref{eq:pqp} monotonically decreases the objective \eqref{eq:nqp} and has linear convergence, if
\begin{align}
\bm{\gamma} \succeq \left(-\bm{Q}\right)_+ \bm{1}
\quad \text{and} \quad
\bm{\phi} \succ \frac{1}{2}
\left(
\sqrt{\frac{\bm{z}^\top\bm{Q}^{-1}\bm{z}}{\lambda_{\text{min}}\left(\bm{Q}\right)}} \text{diag}\left(\bm{Q}\right) - \left|\bm{z}\right|
\right)_+, \label{eq:conv-conds}
\end{align}
where $\lambda_{\text{min}}\left(\cdot\right)$ is the smallest eigenvalue.
\end{theorem}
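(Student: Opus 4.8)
\textbf{Proof proposal for \cref{thm:convergence}.} The plan is to prove both claims through a single quadratic majorizing surrogate for the objective $F(\bm{x}) = \tfrac12 \bm{x}^\top \bm{Q}\bm{x} + \bm{z}^\top\bm{x}$, in the spirit of the auxiliary-function argument used for \cref{thm:wnmf-updates}, and then to read off the linear rate from the curvature of that surrogate. Using $\bm{Q}^+ - \bm{Q}^- = \bm{Q}$ and $\bm{z}^+ - \bm{z}^- = \bm{z}$ (the $\text{diag}(\bm{\gamma})$ and $\bm{\phi}$ terms cancel in these differences), I would split $F(\bm{x}) = \tfrac12\bm{x}^\top \bm{Q}^+\bm{x} - \tfrac12\bm{x}^\top\bm{Q}^-\bm{x} + (\bm{z}^+)^\top\bm{x} - (\bm{z}^-)^\top\bm{x}$ and majorize each piece at the current point $\bm{x}^t$: the diagonal AM--GM bound $\bm{x}^\top\bm{Q}^+\bm{x} \le \sum_i (\bm{Q}^+\bm{x}^t)_i\, x_i^2/x_i^t$ for the entrywise-nonnegative quadratic part; the scalar bound $x_i \le \tfrac12(x_i^2/x_i^t + x_i^t)$ for the nonnegative linear part; the tangent hyperplane at $\bm{x}^t$ for the concave part $-\tfrac12\bm{x}^\top\bm{Q}^-\bm{x}$; and leave $-(\bm{z}^-)^\top\bm{x}$ untouched. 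Summing gives a surrogate $G(\bm{x},\bm{x}^t)$ that agrees with $F$ at $\bm{x}^t$, dominates $F$, and is a separable quadratic in $\bm{x}$ whose unconstrained minimizer is exactly the PQP update \eqref{eq:pqp}; that minimizer is automatically nonnegative because $\bm{Q}^{\pm}$ and $\bm{z}^{\pm}$ are entrywise nonnegative and $\bm{x}^t > 0$. The sandwich $F(\bm{x}^{t+1}) \le G(\bm{x}^{t+1},\bm{x}^t) \le G(\bm{x}^t,\bm{x}^t) = F(\bm{x}^t)$ then yields monotone decrease. The first condition in \eqref{eq:conv-conds}, $\bm{\gamma}\succeq(-\bm{Q})_+\bm{1}$, is exactly what makes the surrogate valid: it renders $\bm{Q}^-$ diagonally dominant with nonnegative diagonal, hence positive semidefinite, so the tangent-hyperplane step is a genuine upper bound; this also shows that the hypothesis $\gamma_j \ge Q_{jj}$ used in \cite{Brand11} is unnecessary. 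Nonnegativity of $\bm{\phi}$, implied by the stated bound, plays the minor role of keeping the iterates strictly positive, since the update's numerator is everywhere $\succeq\bm{\phi}\succ 0$.

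For the linear rate I would use that $F$ is quadratic with constant Hessian $\bm{Q}$ and that $G(\cdot,\bm{x}^t)$ is a quadratic agreeing with $F$ in value and gradient at $\bm{x}^t$, so $G(\bm{x},\bm{x}^t) = F(\bm{x}^t) + \nabla F(\bm{x}^t)^\top(\bm{x}-\bm{x}^t) + \tfrac12(\bm{x}-\bm{x}^t)^\top\bm{D}^t(\bm{x}-\bm{x}^t)$ with diagonal surrogate Hessian $\bm{D}^t = \text{diag}\big((\bm{Q}^+\bm{x}^t + \bm{z}^+)\oslash\bm{x}^t\big)$, and $\bm{D}^t \succeq \bm{Q}$ because $G\ge F$. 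Exact minimization of $G$ then gives the per-step decrease $F(\bm{x}^t) - F(\bm{x}^{t+1}) \ge \tfrac12\nabla F(\bm{x}^t)^\top(\bm{D}^t)^{-1}\nabla F(\bm{x}^t)$, while the quadratic form gives $F(\bm{x}^t) - F^\star \le \tfrac12\nabla F(\bm{x}^t)^\top\bm{Q}^{-1}\nabla F(\bm{x}^t)$ by comparing with the unconstrained minimum, which lower-bounds the constrained optimum $F^\star$. Combining these, $F(\bm{x}^{t+1}) - F^\star \le (1 - 1/\kappa)(F(\bm{x}^t) - F^\star)$ whenever $\bm{D}^t \preceq \kappa\bm{Q}$, so linear convergence reduces to a uniform comparability $\bm{D}^t \preceq \kappa\bm{Q}$ along the trajectory.

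The main obstacle is precisely this uniform curvature bound: $(\bm{D}^t)_{ii} = \big((\bm{Q}^+\bm{x}^t)_i + z_i^+\big)/x_i^t$ can inflate when a coordinate $x_i^t$ is small, so one must control the iterates. I would first use that $\bm{Q}\succ 0$ makes $F$ coercive, so monotone decrease confines the iterates to a sublevel set of radius of order $\sqrt{\bm{z}^\top\bm{Q}^{-1}\bm{z}/\lambda_{\text{min}}(\bm{Q})}$, and then argue that on that set the update cannot push $x_i^t$ below a fixed fraction of $(\bm{Q}^+\bm{x}^t + \bm{z}^+)_i/(\kappa\,Q_{ii})$ for a suitable explicit $\kappa$. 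This is exactly where the second condition in \eqref{eq:conv-conds} is needed: it is equivalent to $z_i^+ + z_i^- = |z_i| + 2\phi_i > \sqrt{\bm{z}^\top\bm{Q}^{-1}\bm{z}/\lambda_{\text{min}}(\bm{Q})}\,Q_{ii}$, which makes the relevant ratio bounded over the whole sublevel set and thereby pins down $\kappa$; this is the $\bm{\phi}$ requirement that \cite{Brand11} omitted. Discharging this trajectory estimate with the precise constant is the one genuinely technical step; the surrogate construction and the two-sided quadratic bounds are routine once it is in place.

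As a cleaner route for the asymptotic rate (which I would present as a remark or fallback), one can instead linearize the update map $T$ at its fixed point $\bm{x}^\star$: on the support of $\bm{x}^\star$ the Jacobian is $\bm{I} - \text{diag}(\bm{x}^\star\oslash\bm{s})\,\bm{Q}$ with $\bm{s} = \bm{Q}^+\bm{x}^\star + \bm{z}^+$, whose eigenvalues coincide with those of $(\bm{D}^\star)^{-1/2}\bm{Q}(\bm{D}^\star)^{-1/2}$ and hence lie in $[0,1)$ because $\bm{Q}\preceq\bm{D}^\star$; off the support the Jacobian is diagonal with entries $1 - \nabla_{x_i}F(\bm{x}^\star)/s_i < 1$ by complementarity. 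The resulting block-triangular $\bm{J}$ has spectral radius strictly below one, and the conditions on $\bm{\gamma}$ and $\bm{\phi}$ are what guarantee $\bm{D}^\star\succeq\bm{Q}$ and thus keep this radius bounded away from one, giving local linear convergence to the (by convexity, global) optimum.
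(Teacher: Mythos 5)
Your majorization argument for the monotone-decrease half is sound and is actually more self-contained than the paper's treatment, which simply cites Brand et al.\ for that part; your observation that $\bm{\gamma}\succeq(-\bm{Q})_+\bm{1}$ is exactly what makes $\bm{Q}^-$ diagonally dominant, hence PSD, hence the tangent-plane step valid, is the right reading of that condition, and your identification of the PQP update as the exact minimizer of the separable surrogate checks out. The paper takes an entirely different route to the linear rate, however: it perturbs a single coordinate of the KKT point $\bm{x}^*$ by a \emph{finite} amount $\epsilon>-x_i^*$, computes the one-step error ratio $\left|(x_i'-x_i^*)/(x_i-x_i^*)\right|$ in closed form using the complementarity condition $x_i^*(\bm{Q}_i\bm{x}^*+z_i)=0$, and derives the $\bm{\phi}$ condition from forcing that ratio into $(-1,1)$; the bound $\left\|\bm{x}^*\right\|_\infty\leq\sqrt{\bm{z}^\top\bm{Q}^{-1}\bm{z}/\lambda_{\min}(\bm{Q})}$ then eliminates $x_i^*$ from the condition, which is where the expression in \eqref{eq:conv-conds} comes from.

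The genuine gap is in your primary route to the rate. The uniform comparability $\bm{D}^t\preceq\kappa\bm{Q}$ cannot hold along the trajectory whenever the constrained optimum lies on the boundary, i.e.\ $x_i^*=0$ for some $i$ --- the generic situation for a nonnegativity-constrained QP. In that case $x_i^t\to 0$ while $(\bm{D}^t)_{ii}=\bigl((\bm{Q}^+\bm{x}^t)_i+z_i^+\bigr)/x_i^t\geq\phi_i/x_i^t\to\infty$, so no finite $\kappa$ works; your proposed fix (a coordinate-wise lower bound on $x_i^t$ over a sublevel set) is exactly what fails, since the sublevel set contains the boundary point you are converging to. Relatedly, the role you assign to the $\bm{\phi}$ condition in that route does not match how it actually functions: as the paper's computation shows, the danger is not small curvature but \emph{overshoot} --- when a coordinate is perturbed far toward zero, the update's denominator, evaluated at the perturbed point rather than at $\bm{x}^*$, loses the stabilizing $Q_{ii}x_i^*$ term and the error ratio can drop below $-1$ unless $|z_i|+2\phi_i>Q_{ii}x_i^*$. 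Your fallback (linearizing the update map at the fixed point) is the closest in spirit to the paper's argument and handles the boundary via complementarity, but as written it proves too little and too much at once: the infinitesimal Jacobian analysis gives $\bm{D}^*\succeq\bm{Q}$ from the $\bm{\gamma}$ condition alone, so it never invokes the $\bm{\phi}$ condition the theorem is about, it yields only a local asymptotic rate, and the off-support block needs strict complementarity ($\nabla_iF(\bm{x}^*)>0$) to have diagonal entries strictly below one. To recover the theorem as stated you would need to redo the fixed-point analysis with finite perturbations, which is precisely the paper's calculation leading to $\phi_i>\tfrac12(Q_{ii}x_i^*-|z_i|)_+$.
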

\begin{proof}
First, the condition $\bm{\gamma} \geq \left(-\bm{Q}\right)_+ \bm{1}$ suffices to ensure that the updates monotonically decrease \eqref{eq:nqp} \cite{Brand11a}. Thus, it remains to show the condition on $\bm{\phi}$. Suppose that the $i$-th element of the optimum $\bm{x}^*$ is perturbed by a non-zero $\epsilon > - x_i^*$. Let $\bm{x} = \bm{x}^* + \epsilon \bm{e}_i$, and applying one update gives $\bm{x}^\prime$. Denote the $i$-th row of $\bm{Q}^+$, $\bm{Q}^-$, and $\bm{Q}$ respectively by $\bm{P}_i$, $\bm{N}_i$, and $\bm{Q}_i$, then it holds that $\bm{P}_i \bm{e}_i = Q_{ii} + \gamma_i$ and $\bm{N}_i \bm{e}_i = \gamma_i$ by definition. We now consider the ratio of errors between successive iterations:
\begin{align*}
\left|\frac{x_i^\prime - x_i^*}{x_i - x_i^*}\right| &= \left|\frac{1}{\epsilon}
\left(\frac{\bm{N}_i \left(\bm{x}^* + \epsilon\bm{e}_i\right) + z_i^-}
{\bm{P}_i\left(\bm{x}^* + \epsilon\bm{e}_i\right) + z_i^+}x_i - x_i^*\right)
\right|\\
&= \left| \frac{\bm{N}_i\bm{x}^* + \epsilon \gamma_i + z_i^-}
{\bm{P}_i\bm{x}^* + \epsilon Q_{ii} + \epsilon \gamma_i + z_i^+}
\frac{x_i^* + \epsilon}{\epsilon} - \frac{x_i^*}{\epsilon}\right|\\
&= \left| \frac{\bm{N}_i\bm{x}^* + \epsilon \gamma_i + z_i^-}
{\bm{P}_i\bm{x}^* + \epsilon Q_{ii} + \epsilon \gamma_i + z_i^+}
- \frac{x_i^*}{\epsilon} \frac{\bm{Q}_i\bm{x}^* + z_i + \epsilon Q_{ii}}
{\bm{P}_i\bm{x}^* + \epsilon Q_{ii} + \epsilon \gamma_i + z_i^+}\right|.
\end{align*}
From the KKT first-order optimality condition $x_i^* \left(\bm{Q}_i \bm{x}^* + z_i\right) = 0$, we simplify the ratio as
\begin{align}
\left|\frac{x_i^\prime - x_i^*}{x_i - x_i^*}\right| &= 
\left| \frac{\bm{N}_i\bm{x}^* + \epsilon \gamma_i + z_i^- - x_i^* Q_{ii}}
{\bm{P}_i\bm{x}^* + \epsilon Q_{ii} + \epsilon \gamma_i + z_i^+}
\right|. \label{eq:ratio}
\end{align}
Observe that the denominator is nonnegative. We also have that the denominator is greater than the numerator using the KKT optimality condition $\bm{Q}_i \bm{x}^* + z_i \geq 0$:
\begin{align*}
\bm{P}_i\bm{x}^* + \epsilon Q_{ii} + \epsilon\gamma_i + z_i^+
- \left(\bm{N}_i\bm{x}^* + \epsilon\gamma_i + z_i^- - x_i^* Q_{ii}\right)
> \bm{Q}_i \bm{x}^* + z_i \geq 0.
\end{align*}
To achieve linear convergence rate, we may enforce the ratio to be less than one. Equivalently,
\begin{align*}
\bm{P}_i\bm{x}^* + \epsilon Q_{ii} + \epsilon\gamma_i + z_i^+
+ \bm{N}_i\bm{x}^* + \epsilon\gamma_i + z_i^- - x_i^* Q_{ii} > 0.
\end{align*}
It suffices to set
\begin{align}
\phi_i >
\frac{1}{2} \left(Q_{ii} x_i^* - \left|z_i\right|\right)_+. \label{eq:phi-cond-x}
\end{align}

To get rid of $\bm{x}^*$ in \eqref{eq:phi-cond-x}, we have the following inequality
\begin{align*}
\left| \frac{1}{2} \bm{x}^{*\top}\bm{Q}\bm{x}^* + \bm{z}^\top \bm{x}^*\right| \leq \frac{1}{2}\bm{z}^\top \bm{Q}^{-1} \bm{z},
\end{align*}
where the right term is the negative of the minimum of the unconstrained problem, assuming that ${Q}$ is non-singular. If $\bm{Q}$ is singular, then $\bm{x}^*$ can be unbounded. Further simplify the inequality using KKT optimality conditions as
\begin{align*}
\left|\bm{x}^{*\top} \bm{Q} \bm{x}^* \right| &\leq z^\top \bm{Q}^{-1} \bm{z}\\
\lambda_{\text{min}}\left(\bm{Q}\right)\left\|\bm{x}^*\right\|_2^2 &\leq z^\top \bm{Q}^{-1} \bm{z}\\
\left\|\bm{x}^*\right\|_\infty &\leq \sqrt{\frac{z^\top \bm{Q}^{-1} \bm{z}}{\lambda_{\text{min}}\left(\bm{Q}\right)}}.
\end{align*}
Combining with \eqref{eq:phi-cond-x} completes the proof.
\end{proof}

\section{Results on real and simulated data}
\label{sec:exp}
We compare the proposed algorithm {\bf ptpqp} with state-of-the-art approaches including: 1) the tensor power method {\bf tpm} \cite{Anandkumar14} and matrix simultaneous diagonalization, {\bf nojd0} and {\bf nojd1} \cite{Kuleshov15}---two general tensor decomposition methods; 2) nonnegative tensor factorization {\bf hals} \cite{Kim14}; and 3) generalized method of moments {\bf meld} \cite{Zhao16}. We use the online code provided by the corresponding authors.

\subsection{Learning GDLMs on simulated data}
We adapt a simulation study from \cite{Zhao16} to compare runtime and accuracy of parameter estimation. We consider a GDLM 
where each variable takes categorical values $\left\{0,1,2,3\right\}$ and the parameters of the 
Dirichlet mixing distribution are $\{\alpha_j = 0.1\}_{j=1}^k$. We initially consider $25$ variables. The true parameters for each
hidden component $h$  are drawn from the Dirichlet distribution $\text{Dir}\left(0.5,0.5,0.5,0.5\right)$. The resulting moment estimator is a $100$-by-$100$-by-$100$ tensor. We vary the number of components $k$ and add noise by replacing a fraction $\delta$ of the observations with draws from a discrete uniform distribution. We also vary the number of samples $n=100,500,1000,5000$, number of clusters  $k=3,5,10,20$, and contamination $\delta=0,0.05,0.1$. Across these settings we found that the empirical third-order estimator typically exhibits between $20\%$ and $50\%$ negative entries.

\paragraph{Accuracy of inference}
Accuracy is measured by root-mean-square error (RMSE) which we compare across algorithms as a function of the number of components 
for various sample sizes and levels of contamination, see \cref{fig:synth}. Both {\bf hals} and {\bf ptpqp} are consistently among the top estimators, and {\bf ptpqp} outperforms {\bf hals} as $n$ grows. For small sample sizes and many hidden components {\bf meld} achieves the smallest RMSE.  The RMSE of {\bf tpm} is relatively large,  probably due to the whitening technique used to approximately transform the nonorthogonal factorization into an orthogonal one, see \cite{Souloumiac09,Colombo16}. The most relevant observation is that  {\bf ptpqp} outperforms other methods for large, noisy data. 

\begin{figure*}[tb]
\centering
\begin{adjustbox}{width=\textwidth}
\input{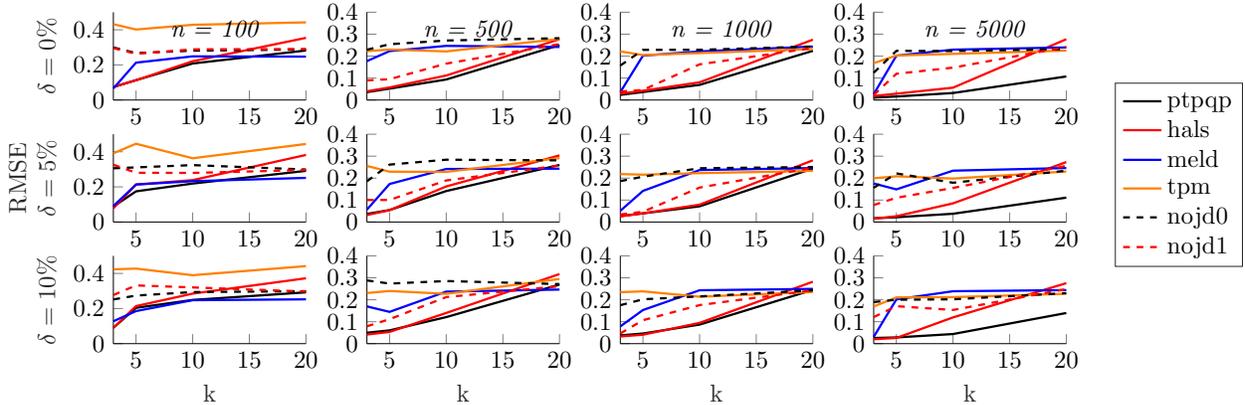}
\end{adjustbox}
\caption{RMSE between inferred parameters and the ground truth.}
\label{fig:synth}
\vskip -0.1in
\end{figure*}

\paragraph{Computational cost} We examined how runtime scales as a function of the number of partitions. For the same model we set $p=1000$ variables and $n=1000$ samples. The tensor is now $4000$-by-$4000$-by-$4000$. We evaluated the runtime of {\bf ptpqp} (without parallelization) with the 
number of partitions set to $\left\{30,40,50,100,200\right\}$. On a laptop with Intel i7-4702HQ@2.20GHz CPU and 8GB memory, {\bf ptpqp} with $100$ partitions completes within $3.5$ min, $4$ min, and $5$ min for $k=4,8,12$, respectively. In addition, the runtime monotonically decreases with the number of partitions. Further speedups can be obtained by parallelizing the factorization of partitions across multiple CPUs or machines.

\subsection{Predicting crowdsourced labels}
In \cite{Zhang14}, a combination of EM and tensor decompositions was used to predict crowdsourcing annotations. The task is to predict the true label given incomplete and noisy observations from a set of workers, this is a mixed membership problem \cite{Dawid}. In \cite{Zhang14} a third-order tensor estimator was proposed to obtain an initial estimate for the EM algorithm. We compare the predictive performance on five data sets of several tensor decomposition methods as well as the EM algorithm initialized with majority voting by the workers ({\bf MV+EM}). The fraction of incorrect predictions and the size of each dataset are in the table below. Note that {\bf ptpqp} matches or outperforms the other tensor methods on all but one dataset, and even outperforms {\bf MV+EM} on two datasets.

\begin{table}[htb]
\vskip 0.15in
\caption{Incorrectly predicted labels (\%)}
\vskip -0.15in
\label{tbl:crowd}
\begin{center}
\begin{small}
\begin{sc}
\begin{tabular}{c|ccccc}
Dataset  & Birds   & RTE    & TREC   & Dogs   & Web \\ \hline
ptpqp &{\bf 11.11} & 7.75   & 30.81  &{\bf15.37}&{\bf 14.44}  \\
hals     & 12.96   & 7.75   & 31.47  & 20.57  & 26.84 \\ 
tpm   &{\bf 11.11} & 7.62   & 31.87  & 15.49  & 14.70 \\
nojd0    & 12.04   & 8.00   & 32.97  & 15.49  & 18.39 \\
nojd1    & 12.04   & 8.00   & 35.91  & 15.86  & 25.97 \\ \hline
MV+EM &{\bf11.11}&{\bf7.12}&{\bf30.20}  & 15.86  & 15.91 \\ \hline
Size     & 108     & 800    & 19033  & 807    & 2665
\end{tabular}
\end{sc}
\end{small}
\end{center}
\vskip -0.15in
\end{table}

\section{Conclusions}
We proposed an efficient algorithm for learning mixed mixture models based on the idea of partitioned factorizations. The key challenge is to consistently match the partitioned parameters with the hidden components. We provided sufficient conditions to ensure consistency. In addition, we have also developed a nonnegative approximation to handle the negative entries in the empirical method of moments estimators, a problem not addressed by several recent tensor methods. Results on synthetic and real data corroborate that the proposed approach achieves improved inference accuracy as well as computational efficiency than state-of-the-art methods.

\section*{Code}

Code for all the simulations is available from Zilong Tan's GitHub repository \\
\url{https://github.com/ZilongTan/ptpqp}. 

\section*{Acknowledgements}
Z.T.\ would like to thank Rong Ge for sharing helpful insights. S.M.\  would like to thank Lek-Heng Lim for insights. Z.T.\ would like to acknowledge the support of grants NSF CNS-1423128,  NSF IIS-1423124, and NSF CNS-1218981. S.M.\ would like to acknowledge the support of grants NSF IIS-1546331, NSF DMS-1418261, NSF IIS-1320357, NSF DMS-1045153, and NSF DMS-1613261. 

\bibliography{ref}

\begin{thebibliography}{10}

\bibitem{Anandkumar12}
A.~Anandkumar, D.~P. Foster, D.~J. Hsu, S.~M. Kakade, and Y.~kai Liu.
\newblock A spectral algorithm for latent {D}irichlet allocation.
\newblock In {\em NIPS}, pages 917--925. 2012.

\bibitem{Anandkumar14}
A.~Anandkumar, R.~Ge, D.~Hsu, S.~M. Kakade, and M.~Telgarsky.
\newblock Tensor decompositions for learning latent variable models.
\newblock {\em JMLR}, 15(1):2773--2832, Jan. 2014.

\bibitem{Anandkumar16}
A.~Anandkumar, P.~Jain, Y.~Shi, and U.~N. Niranjan.
\newblock Tensor vs. matrix methods: Robust tensor decomposition under block
  sparse perturbations.
\newblock In {\em AISTATS}, pages 268--276, 2016.

\bibitem{blei_latent_2003}
D.~M. Blei, A.~Y. Ng, and M.~I. Jordan.
\newblock Latent {D}irichlet allocation.
\newblock {\em JMLR}, 3:993--1022, 2003.

\bibitem{Brand11}
M.~Brand and D.~Chen.
\newblock Parallel quadratic programming for image processing.
\newblock In {\em IEEE International Conference on Image Processing (ICIP)},
  pages 2261--2264, Sept. 2011.

\bibitem{Brand11a}
M.~Brand, V.~Shilpiekandula, and S.~Bortoff.
\newblock A parallel quadratic programming algorithm for model predictive
  control.
\newblock In {\em World Congress of the International Federation of Automatic
  Control (IFAC)}, volume~18, Aug. 2011.

\bibitem{Chi12}
E.~C. Chi and T.~G. Kolda.
\newblock On tensors, sparsity, and nonnegative factorizations.
\newblock {\em SIAM Journal on Matrix Analysis and Applications},
  33(4):1272--1299, December 2012.

\bibitem{Comon10}
P.~Comon and C.~Jutten, editors.
\newblock {\em Handbook of blind source separation : independent component
  analysis and applications}.
\newblock Communications engineering. Elsevier, 2010.

\bibitem{Dawid}
A.~P. Dawid and A.~M. Skene.
\newblock Maximum likelihood estimation of observer error-rates using the {EM}
  algorithm.
\newblock {\em Journal of the Royal Statistical Society. Series C (Applied
  Statistics)}, 28(1):20--28, 1979.

\bibitem{Donoho04}
D.~Donoho and V.~Stodden.
\newblock When does non-negative matrix factorization give a correct
  decomposition into parts?
\newblock In {\em NIPS}, pages 1141--1148. 2004.

\bibitem{Erosheva2005}
E.~A. Erosheva.
\newblock Comparing latent structures of the grade of membership, {R}asch, and
  latent class models.
\newblock {\em Psychometrika}, 70(4):619--628, 2005.

\bibitem{Finesso06}
L.~Finesso and P.~Spreij.
\newblock Nonnegative matrix factorization and {$I$}-divergence alternating
  minimization.
\newblock {\em Linear Algebra and its Applications}, 416(2):270--287, July
  2006.

\bibitem{Gower04}
J.~Gower and G.~Dijksterhuis.
\newblock {\em Procrustes Problems}.
\newblock Oxford Statistical Science Series. OUP Oxford.

\bibitem{Gu14}
Q.~Gu, H.~Gui, and J.~Han.
\newblock Robust tensor decomposition with gross corruption.
\newblock In {\em NIPS}, pages 1422--1430. 2014.

\bibitem{Hsu13}
D.~Hsu and S.~M. Kakade.
\newblock Learning mixtures of spherical {G}aussians: Moment methods and
  spectral decompositions.
\newblock In {\em Innovations in Theoretical Computer Science (ITCS)}, pages
  11--20, 2013.

\bibitem{Huang13}
F.~Huang, U.~N. Niranjan, M.~U. Hakeem, P.~Verma, and A.~Anandkumar.
\newblock Fast detection of overlapping communities via online tensor methods
  on gpus.
\newblock {\em CoRR}, abs/1309.0787, 2013.

\bibitem{Kim14}
J.~Kim, Y.~He, and H.~Park.
\newblock Algorithms for nonnegative matrix and tensor factorizations: a
  unified view based on block coordinate descent framework.
\newblock {\em Journal of Global Optimization}, 58(2):285--319, 2014.

\bibitem{Kolda09}
T.~G. Kolda and B.~W. Bader.
\newblock Tensor decompositions and applications.
\newblock {\em SIAM Rev.}, 51(3):455--500, Aug. 2009.

\bibitem{Kruskal77}
J.~B. Kruskal.
\newblock Three-way arrays: Rank and uniqueness of trilinear decompositions,
  with application to arithmetic complexity and statistics.
\newblock {\em Linear Algebra and Its Applications}, 18(2):95--138, 1977.

\bibitem{Kuleshov15}
V.~Kuleshov, A.~Chaganty, and P.~Liang.
\newblock Tensor factorization via matrix factorization.
\newblock In {\em AISTATS}, 2015.

\bibitem{Lee01}
D.~D. Lee and H.~S. Seung.
\newblock Algorithms for non-negative matrix factorization.
\newblock In {\em NIPS}, pages 556--562, 2001.

\bibitem{Leurgans93}
S.~E. Leurgans, R.~T. Ross, and R.~B. Abel.
\newblock A decomposition for three-way arrays.
\newblock {\em SIAM Journal on Matrix Analysis and Applications},
  14:1064--1083, 1993.

\bibitem{Colombo16}
Nicol{\`{o} Colombo} and N.~Vlassis.
\newblock Tensor decomposition via joint matrix {S}chur decomposition.
\newblock In {\em ICML}, pages 2820--2828, 2016.

\bibitem{pritchard_inference_2000}
J.~K. Pritchard, M.~Stephens, and P.~Donnelly.
\newblock Inference of population structure using multilocus genotype data.
\newblock {\em Genetics}, 155(2):945--959, 2000.

\bibitem{pritchard_association_2000}
J.~K. Pritchard, M.~Stephens, N.~A. Rosenberg, and P.~Donnelly.
\newblock Association mapping in structured populations.
\newblock {\em The American Journal of Human Genetics}, 67(1):170--181, 2000.

\bibitem{Ragnarsson12}
S.~Ragnarsson and C.~F.~V. Loan.
\newblock Block tensor unfoldings.
\newblock {\em {SIAM} J. Matrix Analysis Applications}, 33(1):149--169, 2012.

\bibitem{Ragnarsson13}
S.~Ragnarsson and C.~V. Loan.
\newblock Block tensors and symmetric embeddings.
\newblock {\em Linear Algebra and Its Applications}, 438:853--874, 2013.

\bibitem{Rudelson09}
M.~Rudelson and R.~Vershynin.
\newblock Smallest singular value of a random rectangular matrix.
\newblock {\em Comm. Pure Appl. Math}, pages 1707--1739, 2009.

\bibitem{Procrustes66}
P.~Schönemann.
\newblock A generalized solution of the orthogonal procrustes problem.
\newblock {\em Psychometrika}, 31(1):1--10, 1966.

\bibitem{Sha03}
F.~Sha, L.~K. Saul, and D.~D. Lee.
\newblock Multiplicative updates for nonnegative quadratic programming in
  support vector machines.
\newblock In {\em NIPS}, pages 1065--1072. 2003.

\bibitem{Shashua05}
A.~Shashua and T.~Hazan.
\newblock Non-negative tensor factorization with applications to statistics and
  computer vision.
\newblock In {\em ICML}, pages 792--799, 2005.

\bibitem{Song16}
Z.~Song, D.~P. Woodruff, and H.~Zhang.
\newblock Sublinear time orthogonal tensor decomposition.
\newblock In {\em NIPS}, pages 793--801, 2016.

\bibitem{Souloumiac09}
A.~Souloumiac.
\newblock Joint diagonalization: is non-orthogonal always preferable to
  orthogonal?
\newblock In {\em 3rd International Workshop on Computational Advances in
  Multi-Sensor Adaptive Processing}, page 305–308, 2009.

\bibitem{Wang15}
Y.~Wang, H.-Y. Tung, A.~J. Smola, and A.~Anandkumar.
\newblock Fast and guaranteed tensor decomposition via sketching.
\newblock In {\em NIPS}, pages 991--999. 2015.

\bibitem{Welling01}
M.~Welling and M.~Weber.
\newblock Positive tensor factorization.
\newblock {\em Pattern Recognition Letters}, 22:1255--1261, 2001.

\bibitem{Woodbury}
M.~A. Woodbury, J.~Clive, and J.~Garson, Arthur.
\newblock Mathematical typology: A grade of membership technique for obtaining
  disease definition.
\newblock {\em Computers and Biomedical Research}, 11:277--298, 1978.

\bibitem{Zhang96}
S.~Zhang, W.~Wang, J.~Ford, and F.~Makedon.
\newblock Learning from incomplete ratings using non-negative matrix
  factorization.
\newblock In {\em In Proceedings of the 6th SIAM Conference on Data Mining
  (SDM)}, pages 549--553, 1996.

\bibitem{Zhang14}
Y.~Zhang, X.~Chen, D.~Zhou, and M.~I. Jordan.
\newblock Spectral methods meet {EM:} {A} provably optimal algorithm for
  crowdsourcing.
\newblock In {\em NIPS}, pages 1260--1268, 2014.

\bibitem{Zhao16}
S.~Zhao, B.~E. Engelhardt, S.~Mukherjee, and D.~B. Dunson.
\newblock Fast moment estimation for generalized latent {D}irichlet models.
\newblock {\em CoRR}, abs/1603.05324, 2016.

\end{thebibliography}
\bibliographystyle{IEEE}

\newpage
\appendix
\section{Dirichlet Moments}
\label{sec:moments}
For a Dirichlet random vector $\bm{x}$ with concentration parameters $\bm{\alpha}$, the component moments can be easily shown by the integral
\begin{align*}
\mathbb{E}\left[x_i\right] &= \frac{\alpha_i}{\alpha_0}
\quad
&\mathbb{E}\left[x_i x_j\right] &=
\frac{\alpha_i \alpha_j}{\alpha_0\left(\alpha_0 + 1\right)}, \quad i\neq j\\
\mathbb{E}\left[x_i^2\right] &=\frac{\alpha_i\left(\alpha_i+1\right)}{\alpha_0\left(\alpha_0+1\right)}
\quad
&\mathbb{E}\left[x_j x_s x_t\right] &= \frac{\alpha_j \alpha_s \alpha_t}{\alpha_0\left(\alpha_0+1\right)\left(\alpha_0+2\right)}, \quad j\neq s \neq t\\
\mathbb{E}\left[x_j^2 x_s\right] &= \frac{\alpha_j\left(\alpha_j + 1\right) \alpha_s}{\alpha_0\left(\alpha_0+1\right)\left(\alpha_0+2\right)}, \quad j\neq s
\quad
&\mathbb{E}\left[x_j^3\right] &= \frac{\alpha_j\left(\alpha_j + 1\right) \left(\alpha_j + 2\right)}{\alpha_0\left(\alpha_0+1\right)\left(\alpha_0+2\right)},
\end{align*}
where $\alpha_0 = \sum_{i\geq 1} \alpha_i$. Comparing the second and third order component moments, we arrive at the following cross-moments:
\begin{align}
\mathbb{E}\left[\bm{x}\right] &= \frac{1}{\alpha_0} \bm{\alpha}\\
\mathbb{E}\left[\bm{x} \times \bm{x}\right] &= \frac{1}{\alpha_0\left(\alpha_0 + 1\right)} \bm{\alpha} \bm{\alpha}^\top + \frac{1}{\alpha_0\left(\alpha_0 + 1\right)} \text{diag}\left(\bm{\alpha}\right) = \frac{\alpha_0}{\alpha_0+1} \mathbb{E}\left[\bm{x}\right] \mathbb{E}\left[\bm{x}\right]^\top + \frac{\text{diag}\left(\bm{\alpha}\right)}{\alpha_0\left(\alpha_0+1\right)} \label{eq:2nd-moment}\\
\mathbb{E}\left[\bm{x} \times \bm{x} \times \bm{x}\right] &= 
\frac{1}{\alpha_0 \left(\alpha_0 + 1\right) \left(\alpha_0 + 2\right) } 
\left[
\bm{\alpha} \times \bm{\alpha} \times \bm{\alpha} +
\sum_{i\geq 1} \alpha_i \left(\bm{\alpha} \times \bm{e}_i \times \bm{e}_i\right) +
\sum_{i\geq 1} \alpha_i \left(\bm{e}_i \times \bm{\alpha} \times \bm{e}_i \right) \right. \notag\\
&\phantom{=} +\left.
\sum_{i\geq 1} \alpha_i \left(\bm{e}_i \times \bm{e}_i \times \bm{\alpha} \right) + 2\sum_{i\geq 1} \alpha_i
\left(\bm{e}_i \times \bm{e}_i \times \bm{e}_i\right)
\right]. \label{eq:3rd-moment}
\end{align}
To express the parameters as third-order cross-moments, first observe that the following holds for a Dirichlet random vector $\bm{x}$:
\begin{dmath*}
\frac{1}{\alpha_0^2 \left(\alpha_0 + 1\right)} \sum_{i\geq 1} \alpha_i \left(\bm{\alpha}\times\bm{e}_i\times\bm{e}_i + \bm{e}_i\times\bm{\alpha}\times\bm{e}_i + \bm{e}_i\times\bm{e}_i\times\bm{\alpha}\right)
= \mathbb{E}\left[\mathbb{E}\left[\bm{x}\right] \times \bm{x} \times \bm{x}\right]
+ \mathbb{E}\left[\bm{x} \times \mathbb{E}\left[\bm{x}\right] \times \bm{x}\right]
+ \mathbb{E}\left[\bm{x} \times \bm{x} \times \mathbb{E}\left[\bm{x}\right]\right] - \frac{3\alpha_0}{\alpha_0 + 1} \mathbb{E}\left[\bm{x}\right] \times \mathbb{E}\left[\bm{x}\right] \times \mathbb{E}\left[\bm{x}\right].
\end{dmath*}
This is an immediate result from the the second-order component moments. Combining with \eqref{eq:3rd-moment} yields
\begin{dmath}
\sum_{i\geq 1} \frac{2\alpha_i \bm{e}_i \times \bm{e}_i \times \bm{e}_i}{\alpha_0 \left(\alpha_0+1\right) \left(\alpha_0+2\right) }  = \mathbb{E}\left[\bm{x} \times \bm{x} \times \bm{x} \right]
+ \frac{2\alpha_0^2}{\left(\alpha_0 + 1\right)\left(\alpha_0 + 2\right)}  \mathbb{E}\left[\bm{x}\right] \times \mathbb{E}\left[\bm{x}\right] \times \mathbb{E}\left[\bm{x}\right] 
- \frac{\alpha_0}{\alpha_0 + 2} \left(
 \mathbb{E}\left[\mathbb{E}\left[\bm{x}\right] \times \bm{x} \times \bm{x}\right]
+ \mathbb{E}\left[\bm{x} \times \mathbb{E}\left[\bm{x}\right] \times \bm{x}\right]
+ \mathbb{E}\left[\bm{x} \times \bm{x} \times \mathbb{E}\left[\bm{x}\right]\right]
\right). \label{eq:3rd-moment-final}
\end{dmath}

\subsection{Derivation of moment estimators}
Our goal is to derive the estimators of parameter vectors $\bm{\theta}_j$ for each variable $j$ using the first- and second- order empirical cross-moments of $\bm{b}_{ij}$. In GDLM, the expectation of variable $j$ conditioned on $\bm{x}$ is written
\begin{align*}
\mathbb{E}\left[\bm{b}_{ij} \vert \; \bm{x}\right] = \bm{\theta}_j \bm{x}.
\end{align*}
Thus, the expected observation of variable $j$ is given by
\begin{align}
\mathbb{E}\left[\bm{b}_{ij}\right] = \mathbb{E}\left[\mathbb{E}\left[\bm{b}_{ij} \vert \; \bm{x}\right]\right] = \bm{\theta}_j \mathbb{E}\left[\bm{x}\right] = \frac{\bm{\theta}_j \bm{\alpha}}{\alpha_0} \label{eq:exp_y}.
\end{align}
Now consider two variables $\bm{b}_{ij}$ and $\bm{b}_{is}$ which are generated with the same latent factors $\bm{x}$. Combining \eqref{eq:exp_y} and \eqref{eq:2nd-moment} to obtain
\begin{align*}
\sum_{r\geq 1} \frac{\alpha_r}{\alpha_0 \left(\alpha_0 + 1\right)} \bm{\theta}_{jr} \times \bm{\theta}_{sr} 
=
\mathbb{E}\left[\bm{b}_{ij} \times \bm{b}_{is}\right] - \frac{\alpha_0}{\alpha_0 + 1} \mathbb{E}\left[\bm{b}_{ij}\right] \mathbb{E}\left[\bm{b}_{is}\right]^\top.
\end{align*}
For three variables $\bm{b}_{ij}$, $\bm{b}_{is}$, and $\bm{b}_{it}$, we can write  $\mathbb{E}\left[\bm{b}_{ij} \times \bm{b}_{is} \times \bm{b}_{it}\right] = \mathbb{E}\left[\bm{x} \times \bm{x} \times \bm{x}\right] \times_1 \bm{\theta}_j \times_2 \bm{\theta}_s \times_3 \bm{\theta}_t$. Using \eqref{eq:3rd-moment-final}, we establish that
\begin{dmath}
\sum_{i\geq 1} \frac{2\alpha_i \bm{\theta}_j \times \bm{\theta}_s \times \bm{\theta}_t}{\alpha_0 \left(\alpha_0+1\right) \left(\alpha_0+2\right) }  = \mathbb{E}\left[\bm{b}_{ij} \times \bm{b}_{is} \times \bm{b}_{it} \right]
+ \frac{2\alpha_0^2}{\left(\alpha_0 + 1\right)\left(\alpha_0 + 2\right)}  \mathbb{E}\left[\bm{b}_{ij}\right] \times \mathbb{E}\left[\bm{b}_{is}\right] \times \mathbb{E}\left[\bm{b}_{it}\right] 
- \frac{\alpha_0}{\alpha_0 + 2} \left(
 \mathbb{E}\left[\mathbb{E}\left[\bm{b}_{ij}\right] \times \bm{b}_{is} \times \bm{b}_{it}\right]
+ \mathbb{E}\left[\bm{b}_{ij} \times \mathbb{E}\left[\bm{b}_{is}\right] \times \bm{b}_{it}\right]
+ \mathbb{E}\left[\bm{b}_{ij} \times \bm{b}_{is} \times \mathbb{E}\left[\bm{b}_{it}\right]\right]
\right).
\end{dmath}

\section{Approximate Orthogonalization in the Tensor Power Method}
\label{sec:tpm_preproc}
TPM requires the tensor to be decomposed to be symmetric, and the factor matrices to be orthogonal. Specifically, it performs the following decomposition
\begin{align}
\mathcal{M}_3^\prime = \sum_{i=1}^r \lambda_i \bm{u}_i \times \bm{u}_i \times \bm{u}_i, \label{eq:tod}
\end{align}
where $\bm{u}_i$ are orthonormal vectors. Thus, TPM does not immediately apply to the general CP decomposition \eqref{eq:cp}.

The general resolution is to first use the symmetric tensor embedding \cite{Ragnarsson13,Anandkumar16}, forming a larger symmetric tensor $\mathcal{M}_3$ that contains the asymmetric tensor to be decomposed. The formed $\mathcal{M}_3$ is a sparse $\left(\sum_{i=1}^p d_i\right)$-by-$\left(\sum_{i=1}^p d_i\right)$-by-$\left(\sum_{i=1}^p d_i\right)$ tensor of which $7/9$ entries are zero. The space and computation complexities rapidly become prohibitive when the number of variables $p$ and the category counts $d_j$ grow. 

Next, TPM requires an addition empirical second-order estimator $\widehat{\mathcal{M}}_2$ for orthogonalizing the factor matrices of $\mathcal{M}_3$ to obtain $\mathcal{M}_3^\prime$ \cite{Anandkumar14}. This is done by computing the whitening transformation from $\widehat{\mathcal{M}}_2$. However, the whitening technique based on empirical $\widehat{\mathcal{M}}_2$ is often a cause of suboptimal performance \cite{Souloumiac09,Colombo16}.

\end{document}